\documentclass{article}

\usepackage{microtype}
\usepackage{booktabs} % for professional tables
\usepackage{longtable}

\usepackage{hyperref}
\usepackage{enumitem}
\usepackage{amsmath,amssymb}
 \usepackage{graphicx}
\usepackage{caption}
\usepackage{subcaption}

\usepackage[accepted]{icml2026}

\usepackage{amsmath}
\usepackage{amssymb}
\usepackage{mathtools}
\usepackage{amsthm}
\usepackage{bbm}
\usepackage{pifont}
\usepackage{multirow}
\usepackage{algorithm}
\usepackage{algorithmic}
\usepackage[capitalize,noabbrev]{cleveref}
\usepackage{lipsum}

\theoremstyle{plain}
\newtheorem{theorem}{Theorem}[section]
\newtheorem{proposition}[theorem]{Proposition}

\theoremstyle{definition}

\theoremstyle{remark}

\newcommand{\f}{\texttt{f}}
\newcommand{\g}{\texttt{g}}

\newenvironment{sequation}{\begin{equation}\setlength\abovedisplayskip{2pt}\setlength\belowdisplayskip{2pt}}{\end{equation}}
\usepackage[textsize=tiny]{todonotes}

\newcommand{\model}{SW-DRSO}

\icmltitlerunning{Distributionally Robust Set Representation Learning Under Inference-Time Element Corruption}

\begin{document}

\twocolumn[
  % \icmltitle{Robust Set Representation Learning and Optimization under \\ Set-Structured Partial Evidence Degradation}
  % \icmltitle{Efficient Distributional Robustness Optimization for Sets Under Inference-time Element Corruption: A Latent Representation Synthesis Approach}
\icmltitle{Distributionally Robust Set Representation Learning Under Inference-Time Element Corruption}

  % It is OKAY to include author information, even for blind submissions: the
  % style file will automatically remove it for you unless you've provided
  % the [accepted] option to the icml2026 package.

  % List of affiliations: The first argument should be a (short) identifier you
  % will use later to specify author affiliations Academic affiliations
  % should list Department, University, City, Region, Country Industry
  % affiliations should list Company, City, Region, Country

  % You can specify symbols, otherwise they are numbered in order. Ideally, you
  % should not use this facility. Affiliations will be numbered in order of
  % appearance and this is the preferred way.
  \icmlsetsymbol{Corresponding Author}{*}

  \begin{icmlauthorlist}
    \icmlauthor{Yankai Chen}{a,b}
    \icmlauthor{Hanrong Zhang}{c}
    \icmlauthor{Bowei He}{a,b,Corresponding Author}
    \icmlauthor{Philip S. Yu}{c}
    \icmlauthor{Xue (Steve) Liu}{a,b}
    %\icmlauthor{}{sch}
    %\icmlauthor{}{sch}
  \end{icmlauthorlist}

  \icmlaffiliation{a}{MBZUAI}
  \icmlaffiliation{b}{McGill University}
  \icmlaffiliation{c}{University of Illinois Chicago}

  \icmlcorrespondingauthor{Yankai Chen}{yankaichen@acm.org}
  \icmlcorrespondingauthor{Bowei He}{Bowei.He@mbzuai.ac.ae}

  % You may provide any keywords that you find helpful for describing your
  % paper; these are used to populate the "keywords" metadata in the PDF but
  % will not be shown in the document
  \icmlkeywords{Machine Learning, ICML}

  \vskip 0.3in
]

% this must go after the closing bracket ] following \twocolumn[ ...

% This command actually creates the footnote in the first column listing the
% affiliations and the copyright notice. The command takes one argument, which
% is text to display at the start of the footnote. The \icmlEqualContribution
% command is standard text for equal contribution. Remove it (just {}) if you
% do not need this facility.

% Use ONE of the following lines. DO NOT remove the command.
% If you have no special notice, KEEP empty braces:
\printAffiliationsAndNotice{*Corresponding author.}  % no special notice (required even if empty)
% Or, if applicable, use the standard equal contribution text:
% \printAffiliationsAndNotice{\icmlEqualContribution}

\begin{abstract}
Standard Set Representation Learning methods typically excel on curated data but often overlook the challenge of \textit{Inference-time Element Corruption}. 
This refers to scenarios where deployed models encounter element-level degradations, such as outliers or missing components, that may distort set representation and degrade performance.
We propose \model, a distributionally robust optimization framework tailored for sets. 
Rather than minimizing loss solely on observed training data, \model~optimizes a tractable surrogate of the
worst-case expected loss over a family of plausible inference-time variations. 
We introduce a barycentric adversary that approximates the intractable search over corrupted sets by a differentiable training-time optimization over simplex weights.
Extensive experiments across four tasks demonstrate that \model~effectively enhances robustness against corruption while maintaining high overall performance.

\end{abstract}

\section{Introduction}

Set Representation Learning (SRL) aims to encode a unordered collection of elements into a vector while preserving essential set-theoretic properties~\cite{zaheer2017deep, skianis2020rep}. 
This capability is crucial across diverse domains, from processing point clouds~\cite{qi2017pointnet} and modeling protein structures~\cite{naderializadeh2025aggregating}, to enabling batch retrieval and recommendation~\cite{li2021les3,zhang2022knowledge,hihpq,chen2023star,he2024interpretable}. 
Recent methods employ attention mechanisms~\cite{lee2019set}, optimal transport formulations~\cite{skianis2020rep, mialon2021trainable}, and sophisticated aggregation strategies~\cite{zhang2019fspool} to capture complex set structures while adhering to preserve permutation invariance and handle variable cardinality.

\begin{figure}[t]
    \begin{center}
    % \centerline{\includegraphics[width=\columnwidth]{figs/overview_corruption.pdf}}
    \centerline{\includegraphics[width=\columnwidth]{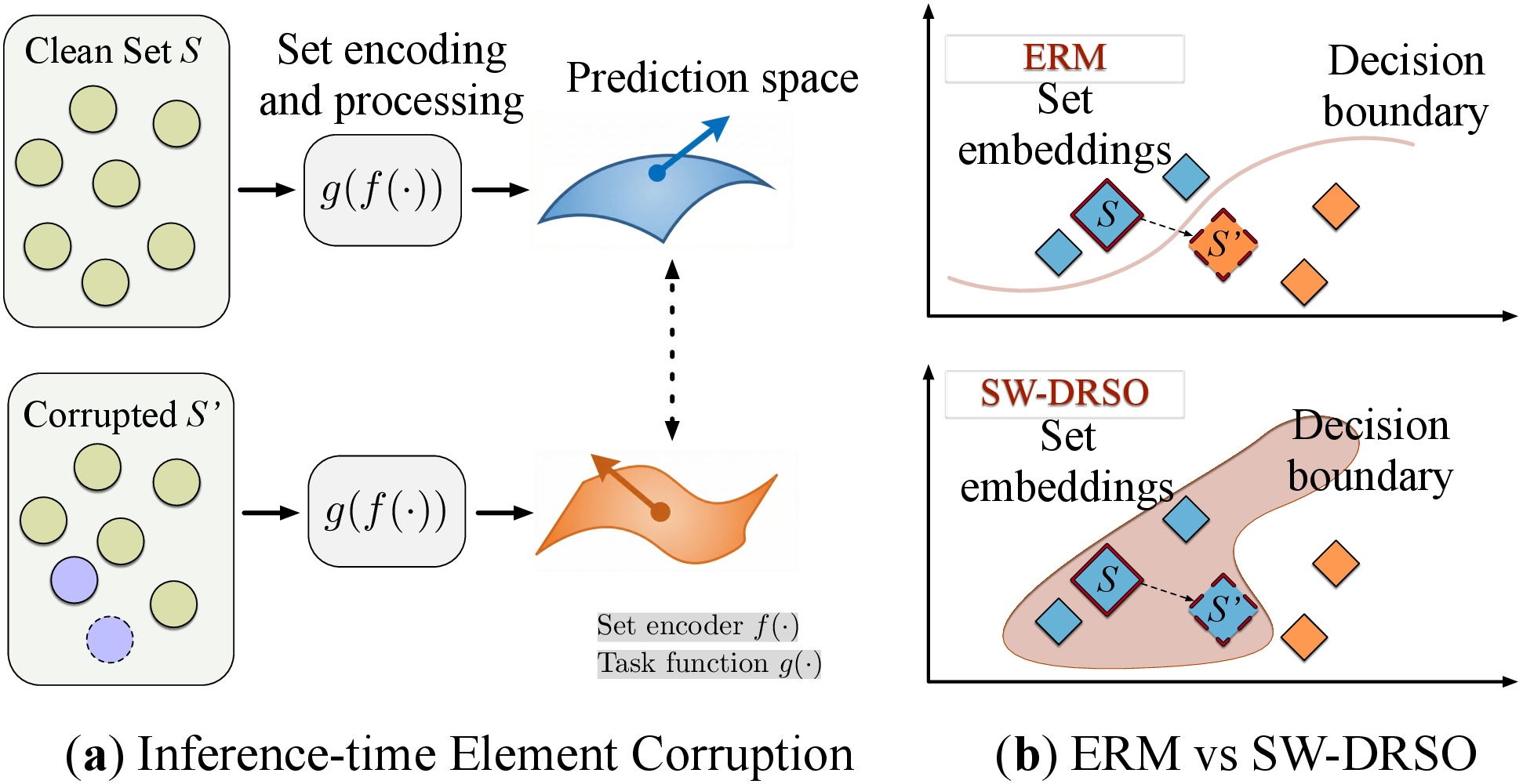}}
    \caption{Illustration of inference-time element corruption. (a) The corruption leads to variations in the prediction space. (b) While ERM performance degrades on the corrupted set $S'$, our \model~enhances robustness by optimizing the decision boundary.}
    \label{fig:overview}
    \end{center}
\end{figure}

While standard SRL methods excel on \textit{curated} data, they often overlook a robustness challenge in practical environments: \textit{Inference-time element corruption}. 
Specifically, it refers to scenarios where models trained on clean and complete sets, but are deployed on inputs with element-level degradation at inference time. 
Such corruption is typically sparse and localized: it affects only a small fraction of elements without altering the overall set theme, i.e., \textit{label-preserving}. 
Common examples include outlier points that appear in point clouds due to sensor errors~\cite{rusu2008towards}.
Despite their sparsity, as shown in Figure~\ref{fig:overview}(a), even a few corrupted elements may impact the set encoder and task processing to produce unreliable predictions. 
For instance, attention-based pooling is vulnerable to outliers that attract disproportionate weights, thereby dominating the global representation~\cite{zhou2022understanding}.

To handle such inference-time uncertainty, \textit{Distributionally Robust Optimization (DRO)} offers a principled framewor~\cite{sagawa2019distributionally,bertsimas2019adaptive}.
By minimizing the worst-case expected loss over a family of plausible distributions, DRO can hedge against unknown inference-tiem corruption patterns~\cite{rahimian2019distributionally}. 
This approach offers a natural extension beyond standard Empirical Risk Minimization (ERM), which optimizes performance only on nominal training distributions and may fail to generalize when inference-time inputs deviate from the training regime.

However, applying DRO directly to SRL problems faces fundamental computational challenges.
First, defining the ambiguity regions\footnote{The family of plausible corrupted distributions. We use ``region'' instead of the standard term ``ambiguity set'' to avoid confusion with the set-structured data itself.} is non-trivial for set-structured data.
Unlike continuous inputs where ambiguity regions are naturally constructed via norm-bounded perturbations~\cite{rahimian2022frameworks}, sets are discrete and combinatorial.
Second, even with a well-defined ambiguity region, solving the inner maximization in conventional DRO formulation to identify the worst cases is computationally prohibitive. 
This typically requires solving discrete combinatorial optimization with factorial complexity~\citep{gao2024wasserstein,kuhn2019wasserstein}, making the training process unscalable.

To address these challenges, we propose \textbf{SW-DRSO} (Sliced-Wasserstein Distributionally Robust Set Optimization), a scalable DRO framework designed for SRL. 
As shown in Figure~\ref{fig:overview}(b), it improves the robustness with better optimized decision boundary compared with conventional ERM.
Specifically, we first represent sets as empirical measures~\citep{muandet2012learning} and define the corruption region via the Sliced-Wasserstein (SW) metric~\citep{bonneel2015sliced,pswe,amirfourier}. 
This formulation enables flexible modeling of corrupted sets without requiring discrete enumeration, while the geometric properties of the SW metric facilitate set encoding and ambiguity region construction. 
Second, we introduce a data synthesis strategy that approximates the worst-case optimization through Wasserstein barycentric adversaries~\citep{agueh2011barycenters}. 
By synthesizing barycenters from local neighborhoods, we transform the intractable search over the combinatorial set space into efficient optimization over a \textit{low-dimensional probability simplex} (i.e., the convex hull of mixing weights). 
Our analysis shows that this differentiable parameterization approximates the inner supremum better than discrete optimization. 
To empirically validate \model, we extensively evaluate \model~across four diverse downstream tasks spanning varied data modalities and distinct corruption patterns, comparing it against state-of-the-art set representation baselines. 
% Codes are available via \url{https://anonymous.4open.science/r/drso-0C7A}.
In summary, our main contributions are three-fold: 
\begin{itemize}[leftmargin=*]
\item We propose \model~to enable effective distributionally robust optimization for Set Representation Learning.
\item We formulate and optimize a tractable ambiguity region through barycentric data synthesis, which converts the computationally prohibitive worst-case search into an efficient, differentiable optimization task.
\item Our results show that \model~consistently outperforms baselines, enhancing robustness against severe corruptions without compromising accuracy on clean data.
\end{itemize}

\section{Related Work}

\textbf{Set Representation Learning (SRL).}  
Representation learning aims to transform raw structured objects into compact embeddings that preserve task-relevant semantics for downstream prediction, retrieval, clustering, and decision making~\citep{zhang2022knowledge,chen2023star,chen2024deep,li2026dyg,lirouting,li2026node}.
Early methods focus on mapping sets to a Hilbert space using permutation-invariant pooling operations~\citep{zaheer2017deep,skianis2020rep,murphy2018janossy,lee2019set,zhang2019fspool,zhanglearning}. 
Some architectures learn optimal permutations directly~\citep{zhanglearning,rezatofighi2018deep} or establish canonical ordering through sorting~\citep{zhang2019fspool}.
Attention-based mechanisms have also become prominent; \citet{lee2019set} adapted the Transformer architecture~\citep{vaswani2017attention} for set data, while \citet{jaegle2021perceiver} utilized cross-attention for permutation invariance. 
Alternative strategies include minimizing discrepancies between sets~\citep{mialon2021trainable}, box embeddings~\citep{lee2022set2box}, and fuzzy representations~\citep{xu2025fuse}. 
For geometric alignments, \citet{skianis2020rep} pioneered Optimal Transport via Bipartite Matching, while \citet{pswe,chenadversarial} introduced Sliced-Wasserstein embedding to capture distributional topology.
Recent advances include meta-learning~\citep{guo2021learning,leeself}, injective embeddings like FSW~\citep{amirfourier}, quantum modeling~\citep{vargas2025quantum}, and protein language applications~\citep{naderializadeh2025aggregating}.
We adopt~\cite{pswe} as our set encoder to leverage Sliced-Wasserstein geometric properties.

\textbf{Distributionally Robust Optimization (DRO).}
DRO improves decision making by optimizing the worst-case expected objective over plausible distributions~\cite{rahimian2019distributionally,rahimian2022frameworks,bertsimas2019adaptive}. 
A central theme of DRO is constructing ambiguity regions~\cite{rahimian2019distributionally,wang2025knowledge,nguyenbeyond,ma2024differentiable,yu2024efficient,inatsu2022bayesian}.
Classical formulations include moment-based and divergence-based approaches such as KL-divergence~\cite{husain2023distributionally,zhou2020planning,namkoong2016stochastic}, while \citet{staib2019distributionally} utilizes kernel mean embeddings in Reproducing Kernel Hilbert Space and \citet{sagawa2019distributionally} minimizes maximum expected loss over predefined groups.
Ambiguity regions defined through Wasserstein distance have gained attention~\cite{kuhn2019wasserstein,kwon2020principled}.
Wasserstein-DRO (WDRO) is appealing due to its geometric interpretability, connections to regularization, and ability to capture distributional shifts~\cite{gao2024wasserstein,micheli2025wasserstein}, with wide applications~\cite{liu2025provable,huang2025effective,wurobust,zhang2025revisiting}.
However, conventional WDRO requires modeling transport costs between individual elements, making direct SRL application computationally expensive.
To address this concern, this work specifically proposes the \model~formulation that is tailored for modeling sets.
By approximating Wasserstein distances through one-dimensional projections, \model~provides an efficient surrogate that preserves distributional robustness while reducing computational cost.

\section{Preliminary and Motivation}

\subsection{Problem Description}
\textbf{Set Representation Learning.}
Let $\mathcal{X}\subseteq\mathbb{R}^d$ be the element space and $\mathcal{Y}$ the label space.
Each input is an unordered set $S=\{x_i\}_{i=1}^{n}$ with variable cardinality $n$, where $x_i\in\mathcal{X}$.
Set representation learning (SRL) learns a permutation-invariant and cardinality-agnostic encoder
$\f:\mathcal{S}(\mathcal{X})\rightarrow\mathbb{R}^c$ that maps $S$ to a vector $v=\texttt{f}(S)$.
The learned set representation $v$ is then used for downstream tasks, such as set classification (predicting label $y\in\mathcal{Y}$) or set matching via embedding-based ranking. We summarize all notations in Appendix~\ref{app:notation}.

A common SRL training objective optimizes the \emph{nominal} (source) risk via empirical risk minimization (ERM)~\cite{zaheer2017deep,lee2019set}.
Let $\mathbb{P}_0$ be the nominal training distribution over clean sets. Conventional ERM solves:
\begin{sequation}
\min_{\texttt{f},\texttt{g}}\ \mathbb{E}_{(S,y)\sim \mathbb{P}_0}\big[\ell\big(\texttt{g}(\texttt{f}(S)),y\big)\big],
\label{eq:erm}
\end{sequation}%
where $\texttt{g}$ denotes the task-specific function (e.g., a classifier or a ranking function), operating on the set representation $v=\texttt{f}(S)$, and $\ell$ the corresponding loss.

\textbf{Set with Inference-Time Element Corruption.}
We study a realistic scenario where training data is carefully curated, but sets observed at inference time suffer from \emph{element-level corruption}.
Given $S$ as the underlying clean set; at inference time, the model however may observe a corrupted version $S'$ in stead of $S$.
Such corruptions are typically \emph{sparse} and \emph{heterogeneous}: only a subset of elements is affected, and corruption patterns vary across instances as well. 
Typical examples include missing elements and spurious outliers.

We assume the corruption is \textit{label-preserving}: it only affects the partial set elements but not the ground-truth label of the holistic set semantic.
Importantly, the learner does not observe which elements are corrupted at inference time; it only receives the full set $S'$.

\subsection{Towards Learning Robust Set Representations}
Minimizing Eq.~\eqref{eq:erm} does not necessarily yield {stable} set representations under element-level corruption.
In SRL, $\f$ is typically implemented by aggregating element-wise features (e.g., pooling or attention)~\cite{zaheer2017deep,lee2019set,zhang2019fspool}, where a small fraction of corrupted elements may disproportionately affect the aggregated representation, even when the corruption is label-preserving.
Consequently, the downstream predictor $\g$, trained on clean embeddings, may fail when evaluated on corrupted embeddings at inference time.

\textbf{Robustness in SRL.}
To fix this gap, a natural robustness principle is to optimize performance under the \emph{worst} plausible corruptions.
Let $\Gamma(S)$ denote the ambiguity region of plausible corruptions for each input $S$ (e.g., capturing sparse and heterogeneous element corruption at inference time).
With $\mathbb{P}_0$ denotes the nominal data distribution over training inputs, one feasible solution is to leverage \textit{Distributionally Robust Optimization (DRO)}~\cite{scarf1957min} as:  
\begin{sequation}
\min_{\f,\g} \mathbb{E}_{(S,y)\sim\mathbb{P}_0} \Big[ \sup_{\mathbb{Q}\in\Gamma(S)} \ \mathbb{E}_{S'\sim \mathbb{Q}}\big[\ell\big(\g(\f(S')),y\big)\big] \Big].
\label{eq:dro}
\end{sequation}%
It can hedge against \emph{unknown} inference-time corruption mechanisms by optimizing the worst-case risk~\cite{rahimian2019distributionally}.
However, for set-structured inputs, specifying $\Gamma(S)$ and solving the inner maximization can be prohibitive: for example, directly enumerating $\Gamma(S)$ over discrete set variants often leads to combinatorial explosion.
These challenges motivate the development of a scalable DRO framework specifically for set representation learning.

\section{Our Method}
\label{sec:method}

\subsection{Overview}
Our approach to learning robust set representations is twofold. 
\ding{172} We formulate sets as empirical measures and view corruptions as distributional perturbations. By characterizing the ambiguity region $\Gamma$ via the Sliced-Wasserstein (SW) distance, we derive the formulation of Sliced-Wasserstein Distributionally Robust Set Optimization (\model). 
\ding{173} To mitigate the high computational cost, we further propose a synthesis-based approximation. This yields a \emph{barycentric adversary} that is essentially a low-dimensional and differentiable parameterization to facilitate the efficient identification of target perturbations.

\subsection{\model~Instantiation for Sets}

\subsubsection{\textbf{Modeling Sets with SW Metric}}
We model a set $S=\{x_i\}_{i=1}^n\subset\mathcal{X}$ as an empirical measure over the element space~\cite{muandet2012learning,zaheer2017deep,edwards2017towards}:
\begin{sequation}
\mu_S = \frac{1}{n}\sum_{i=1}^{n}\delta_{x_i} \in \mathcal{P}(\mathcal{X}),
\label{eq:emp_measure}
\end{sequation}%
where $\delta_x$ is the Dirac measure at $x$ and $\mathcal{P}(\mathcal{X})$ denotes the set of probability measures on $\mathcal{X}$.
This representation is permutation-invariant and explicitly captures the within-set distribution, so element corruption can be naturally viewed as a perturbation from $\mu_S$ to $\mu_{S'}$.

To quantify their discrepancy, we adopt the \emph{2-Sliced-Wasserstein} distance~\cite{kolouri2019generalized}, which is derived from the Optimal Transport (OT) theory~\cite{kantorovich2006translocation}.
For $\mu_S$ and $\mu_{S'}$ over $\mathbb{R}^d$, it is defined as:
\begin{sequation}
\resizebox{0.7\linewidth}{!}{$
\displaystyle
{SW}(\mu_S, \mu_{S'}) = \left(\int_{\mathbb{S}^{d-1}} {W}\!\left(\mu_S^\omega, \mu_{S'}^\omega\right)^2 \mathrm{d}\omega\right)^{\frac{1}{2}},
\label{eq:sliced_wasserstein}
$}
\end{sequation}%
where $\mathbb{S}^{d-1}$ is the unit sphere, and $\mu_{S}^{\omega}$, $\mu_{S'}^{\omega}$ denote the 1D projections under $\omega(x)=\omega^\top x$.
$W$ denotes the standard 2-Wasserstein distance that can computed as:
\begin{sequation}
\resizebox{0.95\linewidth}{!}{$
\displaystyle
\label{eq:was_dis}
{W}(\mu_S^\omega, \mu_{S'}^\omega) = \Big(\inf_{\texttt{p}\in \text{plans}(\mu_S^\omega, \mu_{S'}^\omega)} \int \|x - \texttt{p}(x)\|^{2}\mathrm{d}\mu_S^\omega(x) \Big)^{\frac{1}{2}}.
$}
\end{sequation}%
The infimum is taken over all transport plans between $\mu_S^\omega$ and $\mu_{S'}^\omega$. 
For 1D measures, a closed-form optimal solution $\texttt{p}^+$ exists as the minimizer in Eq.~\eqref{eq:was_dis}.

\begin{figure}[t]
    \begin{center}
    \centerline{\includegraphics[width=\columnwidth]{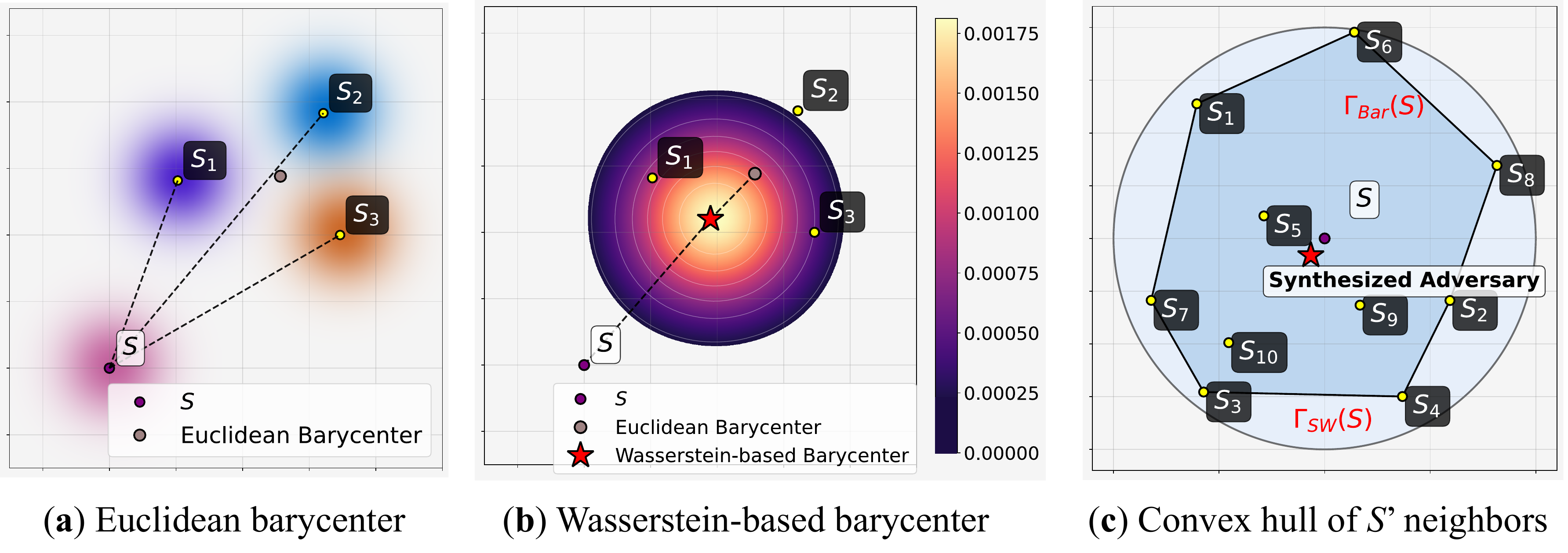}}
    \caption{Illustration of Wasserstein barycentric data synthesis.}
    \label{fig:x}
    \end{center}
\vspace{-0.2cm}
\end{figure}

\subsubsection{\textbf{\model~Instantiation}}
Using SW to define feasible corruptions, we can try to instantiate $\Gamma(S)$ as an SW-ball around $\mu_S$:
\begin{sequation}
\Gamma_{SW}(S) := \left\{S': {SW}(\mu_S, \mu_{S'}) \leq \rho\right\},
\label{eq:uncertainty_set_sw}
\end{sequation}%
where $\rho>0$ controls the corruption radius.
This yields \model~objective as follows:
\begin{sequation}
\min_{\f,\g}
\mathbb{E}_{(S,y)\sim\mathbb{P}_0}\Big[\sup_{S'\in \Gamma_{SW}(S)} \ell\!\left(\g(\f(S')),y\right) \Big].
\label{eq:sw_dro}
\end{sequation}%
A practical instantiation of Eq.~\eqref{eq:sw_dro} may expect a set encoder $\f(\cdot)$ whose representation respects the same SW geometry.
Since standard pooling ignores this property~\cite{skianis2020rep}, we introduce a Wasserstein-aware encoder specifically aligned with \model.

\paragraph{Wasserstein-aware Set Encoder~\cite{pswe}.}
Following~\citet{pswe}, $\f(\cdot)$ can be implemented with the following two stages:
\begin{itemize}[leftmargin=*]
\item It first introduces a learnable reference set $O$ $=$ $\{o_h\}_{h=1}^H$ with empirical measure $\mu_O$ $=$ $\frac{1}{H}\sum_{h=1}^{H}\delta_{o_h}$. For each projection $\omega$, the optimal solution $\texttt{p}^+$ is implemented as:
\begin{sequation}
\texttt{p}^+(t, \mu_S^\omega) = \left(\texttt{F}_{\mu_S^\omega}\right)^{-1}\!\left(\texttt{F}_{\mu_O^\omega}(t)\right), \quad t\in\mathbb{R},
\label{eq:1d_ot_map}
\end{sequation}%
where $\texttt{F}$ and $\texttt{F}^{-1}$ are the CDF and quantile function.
Let $T_O^\omega$ $=$ $\texttt{sort}(\{\omega^\top o_h\}_{h=1}^H)$ and $T_S^\omega$ $=$ $\texttt{sort}(\{\omega^\top x_i\}_{i=1}^n)$.
For empirical measures, $\texttt{p}^+$ reduces to quantile matching between sorted scalars; 
Let $t_h^\omega$ be the $h$-th entry of $T_O^\omega$. When $H$ $\neq$ $n$, a simple index-based quantile approximation can be implemented:
\begin{sequation}
\texttt{p}^+(t_h^\omega,\mu_S^\omega) := T_S^\omega[q_h], \quad q_h := \Big\lceil \frac{n}{H}h\Big\rceil.
\label{eq:rank_matching}
\end{sequation}%

\item It then introduces $R$ Monte Carlo projections $\Omega$ $=$ $\{\omega_r\}_{r=1}^R$ to approximate the integral in Eq.~(\ref{eq:sliced_wasserstein}).
Therefore, we aggregate all OT coordinates by concatenation:
\begin{sequation}
\resizebox{0.95\linewidth}{!}{$
\displaystyle
\f_{\mathrm{SW}}(\mu_S) := \frac{1}{\sqrt{RH}}\texttt{Concat}_{r=1, h=1}^{R,H} \Big(\texttt{p}^+(t_h^{\omega_r}, \mu_S^{\omega_r})\Big) \in \mathbb{R}^{RH}.
\label{eq:wa_encoder}
$}
\end{sequation}%
\end{itemize}

\subsection{Data\,Synthesis\,for\,\model\,Approximation}
\label{subsec:data_synthesis_sw_dro}
Directly optimizing the inner supremum in Eq.~\eqref{eq:sw_dro} over the SW-ball is still intractable, as it requires searching over corrupted sets $S'$ that both satisfy the Sliced-Wasserstein constraint and maximize the downstream task loss after passing through $g$.
Our goal is therefore to \emph{approximately characterize} the most adversarial directions within an SW neighborhood while remaining consistent with \ding{172} the SW geometry that defines the corruption region and \ding{173} the task loss that determines worst-case behavior.
To this end, as shown in Figure~\ref{fig:x}, we introduce a \emph{barycentric synthesis} surrogate, which provides a \emph{low-dimensional and differentiable} parameterization of SW-consistent perturbations, enabling efficient search for high-quality adversarial candidates.
In practice, data synthesis has been used broadly to improve representation learning and model adaptation in many fields~\cite{he2026pedagogically,he2026preserving,he2025paser}.

\subsubsection{\textbf{Wasserstein Barycenter Synthesis}}
\label{sec:wbs}
Let $v_S$ denote the set embedding from $\f_{\mathrm{SW}}(\mu_S)\in\mathbb{R}^{RH}$.
For each $S$ in a minibatch $\mathcal{B}$, we build a local neighbor pool $\mathcal{G}(S)=\{S_k\}_{k=1}^K\subseteq\mathcal{B}$ by $K$-NN under $\ell_2$ set embedding distance\footnote{In practice, we select either the $K$ nearest neighbors or all samples within Eculidean distance $\rho$, whichever is smaller. For notational simplicity, we denote the resulting set size as $K$.}.
We denote the probability simplex by:
\begin{sequation}
\Delta_K := \big\{\Lambda\in\mathbb{R}^K:\ \sum_{k=1}^K\lambda_k=1 \text{ and } \lambda_k \geq 0 \big\}.
\end{sequation}%
Given $\Lambda=(\lambda_1,\ldots,\lambda_K)\in\Delta_K$, we synthesize a perturbed embedding by convex mixing as follows:
\begin{sequation}
\bar v_S(\Lambda) := \sum_{k=1}^K \lambda_k\, v_{S_k}.
\label{eq:barycenter_embedding}
\end{sequation}%

Despite simplicity, Eq.~\eqref{eq:barycenter_embedding} remains the set semantics within the SW-ball, as shown in Prop.~\ref{prop:jensen_locality} (all theoretical proofs are deferred to Appendix~\ref{app:proof}).
\begin{proposition}
\label{prop:jensen_locality}
For any $\Lambda\in\Delta_K$, let $S_\Lambda$ denote the implicit (virtual) set that corresponds to the synthesized embedding $\bar v_S(\Lambda)$.
We have $S_\Lambda \in \Gamma(S)$.
\end{proposition}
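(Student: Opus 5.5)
The plan is to work entirely in embedding space and use the fact that $\f_{\mathrm{SW}}$ is an isometry-type map for the (Monte Carlo) SW geometry. For each fixed projection $\omega_r$, the coordinates $\frac{1}{\sqrt{RH}}\big(\texttt{p}^+(t_h^{\omega_r},\mu_S^{\omega_r})\big)_{h=1}^H$ are samples of the quantile function $\texttt{F}^{-1}_{\mu_S^{\omega_r}}$ at the reference levels. Because one-dimensional $W$ is the $L^2$ distance between quantile functions, we get
\begin{equation*}
\|v_S - v_{S'}\|_2^2 \;=\; \frac{1}{R}\sum_{r=1}^R \frac{1}{H}\sum_{h=1}^H \big(T_S^{\omega_r}[q_h]-T_{S'}^{\omega_r}[q_h]\big)^2 \;\approx\; \widehat{SW}(\mu_S,\mu_{S'})^2 ,
\end{equation*}
which holds with equality for the discretized (Monte Carlo plus index-quantile) version of Eq.~\eqref{eq:sliced_wasserstein}. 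I will therefore read $\Gamma(S)$ as the SW-ball $\Gamma_{SW}(S)$ of Eq.~\eqref{eq:uncertainty_set_sw} measured by this discretized SW, so that membership reduces to $\|v_{S_\Lambda}-v_S\|_2\le\rho$.

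Next I define the virtual set $S_\Lambda$. For each slice, the convex combination of sorted quantile vectors $\sum_k\lambda_k T_{S_k}^{\omega_r}[q_h]$ is again nondecreasing in $h$, because a nonnegative combination of monotone sequences is monotone. It is therefore a valid quantile function of some 1D empirical measure. This is exactly the 1D Wasserstein barycenter of $\{\mu_{S_k}^{\omega_r}\}$ with weights $\Lambda$. So $\bar v_S(\Lambda)$ is the embedding of the sliced barycenter measure, which I take as $\mu_{S_\Lambda}$ (defined slice-wise, hence "implicit/virtual"). This gives $v_{S_\Lambda}=\bar v_S(\Lambda)$.

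Finally I apply the locality bound via convexity (Jensen). Since $\|\cdot-v_S\|_2$ is convex,
\begin{equation*}
\|\bar v_S(\Lambda)-v_S\|_2 \le \sum_k \lambda_k\|v_{S_k}-v_S\|_2 \le \max_k \|v_{S_k}-v_S\|_2 \le \rho ,
\end{equation*}
where the last step uses the construction of $\mathcal{G}(S)$, whose neighbors lie within Euclidean embedding distance $\rho$ (see the footnote). Combined with the isometry from the first step, this yields $SW(\mu_S,\mu_{S_\Lambda})\le\rho$, i.e.\ $S_\Lambda\in\Gamma(S)$.

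The main obstacle is the middle step: justifying that $\bar v_S(\Lambda)$ actually corresponds to some set, so that $S_\Lambda$ is well defined. Slice-wise barycenters need not be realized by one common $d$-dimensional point cloud. I would handle this by defining $S_\Lambda$ through its family of projected measures, i.e.\ as the sliced barycenter. This is consistent with the paper calling it "virtual", and it suffices because $\Gamma_{SW}$ only depends on projections. A secondary caveat is that the equality between embedding distance and SW holds only for the discretized SW, so I state the result for that version, or with an $O(1/\sqrt{R})$ Monte Carlo error absorbed into $\rho$.
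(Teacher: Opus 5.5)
Your proposal is correct and matches the paper's proof: convexity plus the $\rho$-radius selection of $\mathcal{G}(S)$ gives $\|\bar v_S(\Lambda)-v_S\|_2\le\rho$, and the identity between embedding distance and the discretized SW turns this into SW-ball membership (the paper applies Jensen to $\|\cdot\|_2^2$ where you use convexity of the norm, and either works). You also make explicit two points the paper's proof leaves implicit: you construct $S_\Lambda$ as a slice-wise barycenter, which the paper only does later in Proposition~\ref{prop:barycenter_semantics}, and you state the result for the discretized SW, whereas the paper simply assumes the discretized and population SW coincide.
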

Moreover, this synthesis is \emph{not} an arbitrary interpolation in representation space.
Instead, it has a precise optimal-transport interpretation: under the 1D Wasserstein geometry, $\bar v_S(\Lambda)$ corresponds to the encoding of \textbf{Wasserstein barycenters}, i.e., the Fr\'echet mean that minimizes a weighted sum of squared $W_2$ distances on each projection.
\begin{proposition}
\label{prop:barycenter_semantics}
Given reference measure $\mu_O$, for any batch pool $\mathcal{G}_S$ and weights $\Lambda\in\Delta_K$, the synthesized embedding $\bar v_S(\Lambda)$ corresponds to a (virtual) barycentric set, i.e., the SW Fr\'echet mean of all sets in $\mathcal{G}_S$:
\begin{sequation}
\resizebox{0.95\linewidth}{!}{$
\displaystyle
\bar v_S(\Lambda) = \f_{\mathrm{SW}}(\mu_{\Lambda}), \mu_{\Lambda} \in \arg\min_{\mu\in\mathcal{P}(\mathcal{X})} \sum_{k=1}^K \lambda_k\,{SW}^2(\mu,\mu_{S_k}).
\label{eq:barycenter_min_simple}
$}
\end{sequation}%
\end{proposition}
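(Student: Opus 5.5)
The approach is to work slice by slice, using the fact that 1D optimal transport is linear in quantile functions. Throughout, I treat the encoder as the idealized continuum version of Eq.~\eqref{eq:wa_encoder}. That is, for each projection $\omega_r$ and reference atom $t_h^{\omega_r}$, the coordinate is $\texttt{F}^{-1}_{\mu^{\omega_r}}(u_h)$ with $u_h=\texttt{F}_{\mu_O^{\omega_r}}(t_h^{\omega_r})=h/H$. This is exactly Eq.~\eqref{eq:1d_ot_map}, and the index rule of Eq.~\eqref{eq:rank_matching} is its empirical realization. Under this reading, $\f_{\mathrm{SW}}(\mu)$ is, up to the factor $1/\sqrt{RH}$, the vector of quantile values $\{\texttt{F}^{-1}_{\mu^{\omega_r}}(h/H)\}_{r,h}$. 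Hence
\[
\bar v_S(\Lambda)=\tfrac{1}{\sqrt{RH}}\texttt{Concat}_{r,h}\Big(\sum_k\lambda_k \texttt{F}^{-1}_{\mu_{S_k}^{\omega_r}}(h/H)\Big).
\]
The whole proof therefore reduces to showing that the weighted average of quantile functions is the quantile function of a barycenter, and that this barycenter minimizes the weighted $SW^2$ objective.

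The first step is the 1D barycenter. For each fixed $\omega$, I use the isometry $W(\alpha,\beta)^2=\int_0^1|\texttt{F}_\alpha^{-1}(u)-\texttt{F}_\beta^{-1}(u)|^2du$. This rewrites $\sum_k\lambda_k W^2(\nu,\mu_{S_k}^\omega)$ as a strictly convex quadratic in $\texttt{F}_\nu^{-1}\in L^2(0,1)$. Its pointwise minimizer is $\bar Q^\omega:=\sum_k\lambda_k\texttt{F}^{-1}_{\mu_{S_k}^\omega}$, found by completing the square. Since a convex combination of nondecreasing left-continuous functions is again nondecreasing and left-continuous, $\bar Q^\omega$ is a valid quantile function of some $\nu^\omega$. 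This is the classical result of Agueh--Carlier in 1D.

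The second step lifts this to SW. Since $SW^2(\mu,\cdot)=\int_{\mathbb{S}^{d-1}}W^2(\mu^\omega,\cdot^\omega)d\omega$, we get
\[
\sum_k\lambda_k SW^2(\mu,\mu_{S_k})\ \ge\ \int\sum_k\lambda_k W^2(\nu^\omega,\mu_{S_k}^\omega)\,d\omega .
\]
Equality holds exactly when $\mu^\omega=\nu^\omega$ for a.e.\ $\omega$. So it suffices to exhibit $\mu_\Lambda$ whose projections have quantiles $\bar Q^\omega$, at least on the slices used by the encoder. The natural choice is the pushforward coupling: take $\mu_\Lambda$ to be the distribution whose $\omega$-projection equals the displacement interpolation. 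In the finite Monte Carlo regime with $R$ fixed directions, the encoder only reads the slices $\omega_1,\dots,\omega_R$, and for these the projected barycenters $\nu^{\omega_r}$ are well-defined. In that case I would define $\mu_\Lambda$ as any measure realizing those marginals (or simply as the abstract argmin of the discretized objective) and then read off $\f_{\mathrm{SW}}(\mu_\Lambda)=\bar v_S(\Lambda)$ directly from step one.

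The main obstacle is this realizability issue. A family of 1D barycenters $\{\nu^\omega\}_\omega$ need not be consistent projections of a single measure on $\mathbb{R}^d$, so the full-sphere SW barycenter may not have exactly averaged quantiles on every slice. I would handle this by stating the result for the Monte Carlo SW objective, $SW^2\approx\frac1R\sum_r W^2$ on the encoder's projections, consistent with the paper's notion of a ``virtual'' set. I would also note that the existence of a minimizer follows from lower semicontinuity and coercivity of the objective. If exact consistency is demanded, I would fall back on interpreting $\mu_\Lambda$ through its slice marginals, which is all that $\f_{\mathrm{SW}}$ depends on.
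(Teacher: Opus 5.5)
Your route is the paper's own. Quantile averaging gives the slice-wise 1D barycenter, integrating over $\omega$ gives SW-optimality, and linearity of $\texttt{p}^+$ in the quantile function gives $f_{\mathrm{SW}}(\mu_\Lambda)=\bar v_S(\Lambda)$. You are right that realizability is the crux, and the paper's proof never addresses it. Both its integration step and its final identity presuppose a single $\mu_\Lambda\in\mathcal{P}(\mathcal{X})$ with $(\mu_\Lambda)^\omega=\mu^\omega_\Lambda$ for every $\omega$. That fails in general; the example below already fails on four slices.

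The gap is that your repair asserts the same realizability on the encoder's $R$ slices without proof (``define $\mu_\Lambda$ as any measure realizing those marginals''), and it can fail.

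\emph{Where it holds.} It holds when $\omega_1,\dots,\omega_R$ are linearly independent. Let $A$ be the matrix with rows $\omega_r^\top$ and choose $B$ with $AB=I_R$. Put $Y_r=\bar Q^{\omega_r}(U)$ for one uniform $U$, and set $\mu_\Lambda=\mathrm{law}(BY)$. Then $\omega_r^\top BY=Y_r$, so the $R$-slice objective attains your lower bound. Every minimizer then has the 1D barycenters as its slices, by uniqueness in $L^2(0,1)$, and its encoding is $\bar v_S(\Lambda)$.

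\emph{Where it fails.} Dependent directions are forced once $R>d$, and the paper's Task II uses $R=256$, $d=128$. Then even the finitely many quantile values the encoder reads can be jointly unrealizable. Take $d=2$, $S_1=\{(\pm1,0)\}$, $S_2=\{(0,\pm1)\}$, $\Lambda=(\tfrac12,\tfrac12)$, $H=4$, and slices at angles $0,\tfrac\pi4,\tfrac\pi2,\tfrac{3\pi}4$. Every slice of $\bar v_S(\Lambda)$ reads $(-a,-a,a,a)$ at $u=\tfrac14,\tfrac12,\tfrac34,1$, with $a=\tfrac12(|\cos\theta|+|\sin\theta|)$. A realizing $X$ would need:
\begin{itemize}
\item $X_1,X_2\le\tfrac12$ a.s.\ (slices $0,\tfrac\pi2$ at $u=1$);
\item hence $P(X=(\tfrac12,\tfrac12))\ge\tfrac14$ (slice $\tfrac\pi4$ at $u=\tfrac34$);
\item $P(X_2-X_1\le-1)\ge\tfrac12$ (slice $\tfrac{3\pi}4$ at $u=\tfrac12$);
\item $P(X_1\le-\tfrac12)\ge\tfrac12$ (slice $0$ at $u=\tfrac12$).
\end{itemize}
The events in the second and third items are disjoint, and the event in the fourth misses the one in the second. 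So $P(X_1\le-\tfrac12,\ X_2-X_1\le-1)\ge\tfrac14$, which gives $P(X_2\le-\tfrac32)\ge\tfrac14$. This contradicts $F^{-1}_{X_2}(\tfrac14)=-\tfrac12$.

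\emph{Consequence.} In this example no measure has $f_{\mathrm{SW}}(\mu)=\bar v_S(\Lambda)$, and the discretized argmin does not attain your bound. Your ``virtual'' fallback then only renames the slice data; it does not prove the claim. What your argument establishes is the $R$-slice (Monte Carlo) statement under linear independence of the sampled directions, with the lift landing in $\mathcal{X}$ (automatic if $\mathcal{X}=\mathbb{R}^d$). That hypothesis has to be stated explicitly.
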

$\bar v_S(\Lambda)$ represents a most representative point within the local latent space, where the aggregate distance to all neighboring sets is minimized.
This property enables us to restrict the adversary to barycentric perturbations without leaving the intended SW region, leading to a tractable surrogate of the original inner maximization as follows.

\subsubsection{\textbf{Barycentric Adversary As a Surrogate}}
Based on the introduction of $\Delta_K$, we then reformulate the corruption region as the convex hull of embeddings:
\begin{sequation}
{\Gamma}_{Bar}(S):= \big\{\bar v_S(\Lambda):\Lambda\in\Delta_K\big\}.
\label{eq:structured_uncertainty}
\end{sequation}%
The resulting surrogate replaces the inner supremum with a low-dimensional maximization over $\Delta_K$:
\begin{sequation}
\min_{\f_{\mathrm{SW}},\g}
\mathbb{E}_{(S,y)\sim\mathbb{P}_0} \Big[ \max_{\Lambda\in\Delta_K}\; \ell\big(\g(\bar v_S(\Lambda)),y\big) \Big].
\label{eq:bar_dro_objective}
\end{sequation}%
With this construction, the barycentric inner maximization in Eq.~\eqref{eq:bar_dro_objective} upper-bounds the discrete inner objective over individual sets. As a result, the adversary is strictly more expressive, yielding a more conservative surrogate that better captures worst-case behavior. This relationship is formalized in the following proposition:
\begin{proposition}
\label{prop:disc_le_bar}
Let $\mathcal{L}_{{Disc}}(S)$ $:=$ $\max_{k\in[K]} \ell\big(\g(v_{S_k}),y\big)$ and $\mathcal{L}_{{Bar}}(S)$ $:=$ $\max_{\Lambda\in\Delta_K} \ell\big(\g(\bar v_S(\Lambda)),y\big)$.
For any predictor $\g$ and loss $\ell$, we have:
\begin{sequation}
\mathcal{L}_{{Disc}}(S)\ \le\ \mathcal{L}_{{Bar}}(S).
\label{eq:disc_le_bar}
\end{sequation}%
\end{proposition}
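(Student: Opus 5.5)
The inequality follows because the discrete adversary is a special case of the barycentric one: every neighbor embedding $v_{S_k}$ is a vertex of the simplex parameterization. The argument is a feasible-set inclusion and needs no properties of $\g$ or $\ell$.

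\emph{Step 1 (vertices of the simplex).} Let $e_k\in\mathbb{R}^K$ denote the $k$-th standard basis vector. It has nonnegative entries summing to one, so $e_k\in\Delta_K$. Substituting $\Lambda=e_k$ into Eq.~\eqref{eq:barycenter_embedding} gives $\bar v_S(e_k)=\sum_{j}\delta_{jk}v_{S_j}=v_{S_k}$. Hence $\{v_{S_k}\}_{k=1}^K\subseteq\Gamma_{Bar}(S)$ as defined in Eq.~\eqref{eq:structured_uncertainty}.

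\emph{Step 2 (monotonicity of the maximum).} Fix any $k\in[K]$. Then
\begin{equation*}
\ell\big(\g(v_{S_k}),y\big)=\ell\big(\g(\bar v_S(e_k)),y\big)\le \sup_{\Lambda\in\Delta_K}\ell\big(\g(\bar v_S(\Lambda)),y\big).
\end{equation*}
Taking the maximum over $k\in[K]$ on the left-hand side yields $\mathcal{L}_{Disc}(S)\le\mathcal{L}_{Bar}(S)$.

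\emph{Step 3 (the only subtlety).} The statement writes $\max$ over $\Delta_K$, and this is the one place that needs care. Since $\Delta_K$ is compact, the maximum is attained whenever $\ell\circ\g$ is continuous. For an arbitrary predictor and loss, as the proposition allows, I would read $\max_{\Lambda\in\Delta_K}$ as a supremum. The inequality in Step 2 holds with the supremum regardless of continuity, because it only compares values at feasible points. I would remark on this and also note that equality holds whenever the worst case is attained at a vertex, for instance when $\ell\circ\g$ is convex in its embedding argument.
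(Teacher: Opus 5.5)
Your proof is correct and follows the paper's argument: both take $\Lambda$ to be a standard basis vector to recover $\bar v_S(\Lambda)=v_{S_k}$, so the neighbor embeddings lie in $\Gamma_{Bar}(S)$, and maximizing over this larger feasible set cannot decrease the optimum. Your remark that $\max_{\Lambda\in\Delta_K}$ should be read as a supremum when $\ell\circ\g$ is not continuous is a small refinement that the paper leaves implicit.
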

Intuitively, as shown in Figure~\ref{fig:x}(c), $\Gamma_{\mathrm{Bar}}(S)$ enlarges the feasible adversarial set from a finite vertex set $\{v_{S_k}\}$ to its convex hull, hence the resulting inner maximization provides an upper bound of the discrete counterpart.
We defer to Appendix~\ref{app:lipschi} a quantitative analysis, which characterizes the tightness of the barycentric surrogate by bounding $L_{\mathrm{Bar}}(S)-L_{\mathrm{Disc}}(S)$ under a local Lipschitz condition.

So far, we show that our barycentric synthesis \ding{172} \emph{continuizes} the adversary beyond discrete samples while remaining major semantics within the local set neighborhood, and \ding{173} turns the intractable set-space search into an efficient low-dimensional maximization over $\Delta_K$.

\subsection{Overall Objective and Optimization}
In practice, we optimize a weighted combination of the nominal ERM loss and the barycentric robust term.
Given $\alpha \ge 0$ as a trade-off hyperparameter, our overall training objective is formulated as:
\begin{sequation}
\min_{\f_{\mathrm{SW}},\,\g}\ 
\mathbb{E}_{(S,y)\sim\mathbb{P}_0}
\Big[
\ell\big(\g(v_S),y\big)
+
\alpha\cdot
\max_{\Lambda\in\Delta_K}\ \ell\big(\g(\bar v_S(\Lambda)),y\big)
\Big],
\label{eq:final_training_objective}
\end{sequation}%

For each training set $S$, we approximately solve the inner maximization in Eq.~\eqref{eq:final_training_objective} using projected gradient ascent~\cite{levitin1966constrained} on $\Delta_K$.
We initialize the mixing weights with the uniform distribution
$\Lambda^{(1)}=\left(\frac{1}{K},\ldots,\frac{1}{K}\right)$ and iteratively perform the following updates:
\begin{align}
\tilde{\Lambda}^{(t+1)} &= \Lambda^{(t)} + \eta\,\nabla_{\Lambda}\ell\big(\g(\bar v_S(\Lambda^{(t)})),y\big),\\
\Lambda^{(t+1)} &= \Pi_{\Delta_K}\!\left(\tilde{\Lambda}^{(t+1)}\right),
\end{align}
where $\eta>0$ is the ascent step size and $\Pi_{\Delta_K}$ denotes the Euclidean projection onto $\Delta_K$
(enforcing $\lambda_k\ge 0$ and $\sum_{k=1}^K \lambda_k=1$).
After $T$ ascent steps, we obtain $\Lambda^\star=\Lambda^{(T)}$.
We then fix $\Lambda^\star$ (by stopping gradients through $\Lambda^\star$) and update $\f_{\mathrm{SW}}$ and $\g$ by standard gradient descent on Eq.~\eqref{eq:final_training_objective}.
We detail the pseudo-codes of our \model~in Appendix~\ref{app:algo} and time complexity analysis in Appendix~\ref{app:complexity}.

\section{Experiments}

\subsection{Experiment Setups}
\paragraph{Task Descriptions.}
We validate the robustness of our method across four downstream tasks subject to varied inference-time corruptions.
By encompassing domains ranging from 3D geometry to visual understanding, we expect a systematic evaluation under diverse data modalities.
\begin{itemize}[leftmargin=*]
\item \textbf{Similar Set Ranking (Task I)}: At inference time, the \emph{query set} is corrupted by noise (e.g., injected irrelevant elements and/or missing true elements). The goal is to retrieve and rank candidate sets that remain most similar to the underlying clean query, based on embedding distances. This task evaluates robustness of \emph{set-to-set similarity} modeling under noisy query composition.

\item \textbf{Point Cloud Classification (Task II)}: The task classifies 3D objects represented as unordered point sets by aggregating local geometric information into a global representation.
At inference time, the point cloud is corrupted by \emph{inaccurate observations} (e.g., perturbed coordinates), while the input remains an unordered set of points. 
The model evaluates whether geometric aggregation is resilient to corrupted data.

\item \textbf{Topic Set Expansion (Task III)}: The task expands a topic by selecting relevant keywords from a vocabulary given a seed keyword set, assessing semantic coherence modeling at the set level. At inference time, the \emph{seed keyword set} is imperfect, containing missing topic-defining words and/or noisy, off-topic words. 
We assess robustness of semantic set modeling under such conditions.

\item \textbf{Patch-set Visual Recognition (Task IV)}: The task aims to perform visual classification by encoding sets of local patch elements. At inference time, element corruption is introduced at the \emph{patch level}, where a subset of patches is masked out or perturbed by additive noise. The model is therefore required to perform robust visual recognition against localized corruptions.
\end{itemize}

\paragraph{Data Descriptions.}
We adopted a standardized data partitioning protocol across all tasks: datasets were split into training and testing subsets with an 80:20 ratio. The training portion was subdivided (80:20) to create a validation set for hyperparameter tuning. 
To evaluate robustness under inference-time corruption, datasets are grouped with different corruption levels. 
Specifically, only the validation and test sets are partitioned into \textit{clean/mild/severe} subsets. 
Mild and severe parts introduce corruption to approximately 10\% and 40\% of the input elements or regions.
This protocol trains models on clean data but evaluates them under varying degrees of corrupted inputs, reflecting real deployment conditions.
Data statistics are reported in Appendix~\ref{app:data}. 

\paragraph{Baselines.} 
We compare against a diverse suite of set representation baselines, including classic pooling (\textbf{MeanP/MaxP})~\citep{lin2013network}, universal set-function modeling (\textbf{DeepSet})~\citep{zaheer2017deep}, prototype matching via OT theory (\textbf{RepSet})~\citep{skianis2020rep}, attention-based set Transformers (\textbf{SetTRSM})~\citep{lee2019set}, information-theoretic learning (\textbf{DIEM})~\citep{kim2022differentiable}, learnable sorting-based pooling (\textbf{FSPool})~\citep{zhang2019fspool}, and recent strong OT-based methods such as \textbf{PSWE}~\citep{pswe} and frequency-domain multiset modeling (\textbf{FSW})~\citep{amirfourier}. 
Detailed descriptions of these baselines are provided in the Appendix~\ref{app:baselines}.

\paragraph{Configurations.}
Optimization for all other models is performed using the default Adam optimizer~\citep{adam}, with the learning rate tuned via grid search within the set $\{10^{-4}, 10^{-3}, 10^{-2}\}$. 
Regarding baseline configurations, we strictly followed the hyperparameter settings reported in their original literature; for models without specified configurations, optimal settings were determined via grid search. The implementation was carried out in PyTorch 1.13.0 (Python 3.9) on a Linux server equipped with eight NVIDIA L40S GPUs. To account for statistical variance, all reported results represent the mean performance and standard deviations over five independent runs. 
Comprehensive details for reproducibility are provided in Appendix~\ref{app:hyperparams}.

\subsection{Experimental Results}

\subsubsection{\textbf{Task I Evaluation}}
For the Similar Set Ranking task, we evaluate on two large-scale real-world social network datasets, \textit{Friendster}~\citep{DBLP:journals/kais/YangL15} and \textit{LIVEJ}~\citep{DBLP:conf/imc/MisloveMGDB07}.
Figure~\ref{fig:fs_livej_band} summarizes the results under different corruption conditions.
Sets are ranked by measuring the Euclidean distance in the learned embedding space, with Recall@k and NDCG adopted as the evaluation metrics, which are widely used in retrieval and recommendation settings~\cite{he2024interpretable,chen2023sim2rec,luo2025recranker,luo2026integrating,chen2022modeling,chen2022attentive,cui2024context,cui2026semantic,zhang2026token}.

% ---------- In document ----------
\begin{figure}[t] 
  \centering

  % ---------- Row 1: FS Dataset ----------
  \begin{subfigure}{0.48\linewidth}
    \centering
    % 对应文件名: fs-overall-recall-band.pdf
    \includegraphics[width=\linewidth]{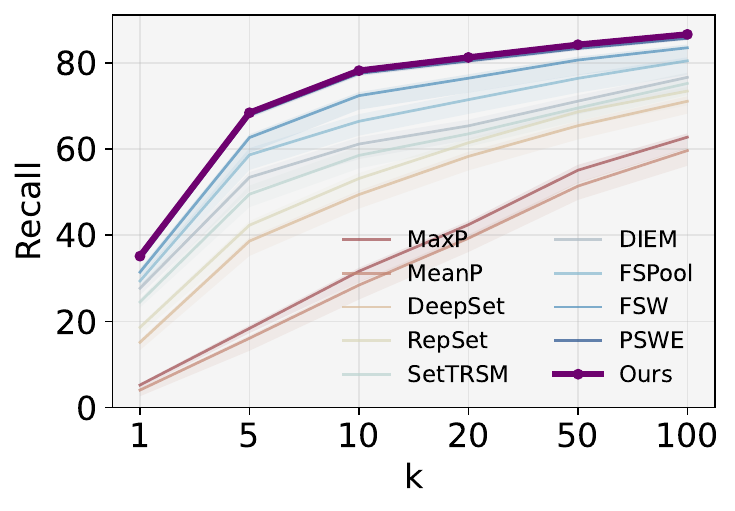}
    \caption{Friendster-Overall-Recall@k}
    \label{fig:fs-recall}
  \end{subfigure}%
  \begin{subfigure}{0.48\linewidth}
    \centering
    % 对应文件名: fs-overall-ndcg-band.pdf
    \includegraphics[width=\linewidth]{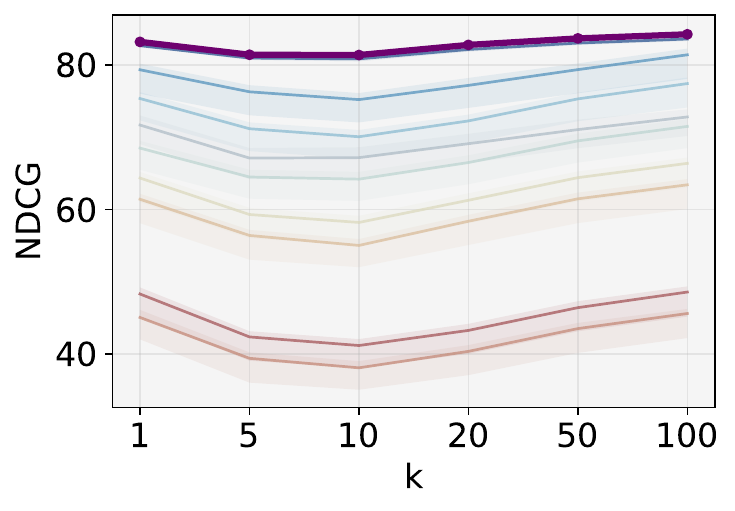}
    \caption{Friendster-Overall-NDCG@k}
    \label{fig:fs-ndcg}
  \end{subfigure}

  % ---------- Row 2: LiveJ Dataset ----------
  \begin{subfigure}{0.48\linewidth}
    \centering
    % 对应文件名: livej-overall-recall-band.pdf
    \includegraphics[width=\linewidth]{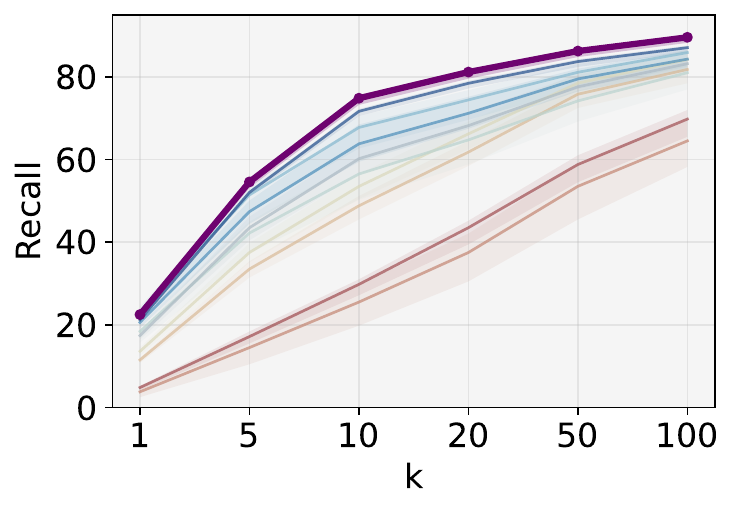}
    \caption{LiveJ-Overall-Recall@k}
    \label{fig:livej-recall}
  \end{subfigure}%
  \begin{subfigure}{0.48\linewidth}
    \centering
    % 对应文件名: livej-overall-ndcg-band.pdf
    \includegraphics[width=\linewidth]{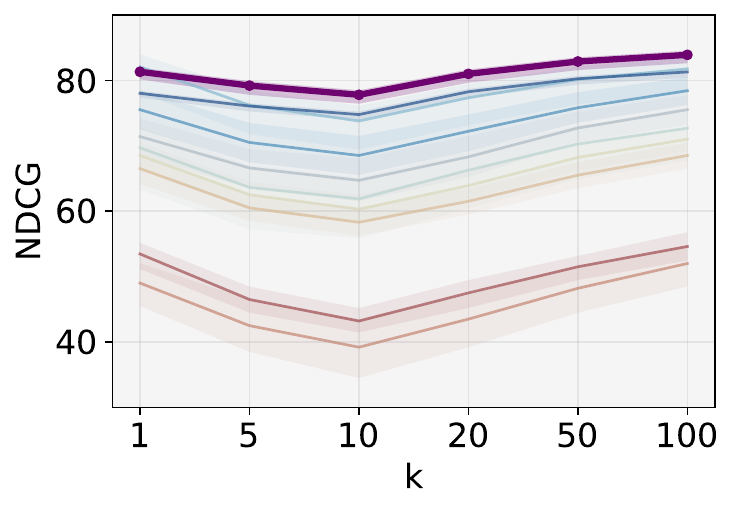}
    \caption{LiveJ-Overall-NDCG@k}
    \label{fig:livej-ndcg}
  \end{subfigure}

  \caption{Task I Performance comparison in terms of Recall@k and NDCG@k. Solid curves correspond to the overall performance and the shaded region indicates the range bounded by the upper clean split and the lower severe split.}
  \label{fig:fs_livej_band}
\end{figure}

We have two major observations:
\ding{172} First, our method in purple consistently achieves strong performance across all datasets and metrics, demonstrating clear overall effectiveness. 
\ding{173} Second, our method exhibits the narrowest shaded band among all compared methods, meaning the performance gap between the severe and clean splits is the smallest. This indicates that our approach is more robust to the inference-time corruption, maintaining stable ranking quality even under challenging conditions. 
Complete numerical results are provided in Appendix~\ref{app:task1}.

\begin{table}[t]
\caption{Task II performance comparison (\%).
Best and second-best results are shown in \textbf{bold} and \underline{underlined}.}
\label{tab:task_pointcloud}
\centering
\small
\begin{minipage}{\linewidth}
\centering
\resizebox{\linewidth}{!}{%
\begin{tabular}{l|cccc}
\toprule
Method & Overall & Clean & Mild & Severe \\
\midrule
MeanP          & 80.41$\pm$0.79 & 84.85$\pm$0.73 & 83.96$\pm$1.18 & 63.97$\pm$3.03 \\
MaxP           & 77.29$\pm$0.73 & 85.49$\pm$0.70 & 79.10$\pm$0.98 & 54.08$\pm$2.82 \\
DeePSet         & 78.67$\pm$0.74 & 85.80$\pm$0.67 & 80.83$\pm$0.99 & 57.60$\pm$2.92 \\
RepSet         & 79.55$\pm$0.76 & 85.09$\pm$0.69 & 82.93$\pm$1.06 & 60.63$\pm$2.97 \\
SetTRSM 	   & 75.28$\pm$0.65 & \underline{86.17$\pm$0.48} & 75.43$\pm$0.85 & 47.81$\pm$2.76 \\
DIEM           & 80.59$\pm$0.68 & 85.10$\pm$0.90 & 83.80$\pm$0.95 & 64.50$\pm$2.10 \\
FSPool         & 80.16$\pm$0.83 & 85.24$\pm$1.41 & 83.51$\pm$0.78 & 62.43$\pm$1.83 \\
FSW            & 81.14$\pm$0.74 & 83.84$\pm$0.59 & \underline{84.62$\pm$1.12} & 69.19$\pm$2.93 \\
PSWE           & \underline{81.50$\pm$0.70} & 84.51$\pm$0.49 & 84.38$\pm$1.08 & \underline{69.65$\pm$2.83} \\
\midrule
\textbf{Ours}         & \textbf{82.86$^*$ $\pm$0.71} & \textbf{86.32$\pm$0.78} & \textbf{85.68$^*$ $\pm$0.73} & \textbf{69.99$^*$ $\pm$2.78} \\
\bottomrule
\end{tabular}
}
\end{minipage}
\end{table}

\subsubsection{\textbf{Task II Evaluation}}
We evaluate point cloud classification on the standard ModelNet benchmark~\cite{wu20153d} with ISAB as the feature backbone~\cite{lee2019set}, with accuracy results summarized in Table~\ref{tab:task_pointcloud}. 
\ding{172} Our method achieves the best performance across all evaluation settings over the strongest baseline in terms of overall accuracy. This demonstrates the effectiveness of our approach for learning robust set representations in point cloud data classification.
\ding{173} When examining different corruption levels, we observe that existing methods exhibit varying robustness performance. 
In particular, PSWE shows relatively strong resistance under severe corruption, achieving the best baseline performance in this regime, while FSW performs competitively under mild corruption, ranking as the second-best baseline. 
These results suggest that different aggregation mechanisms emphasize different aspects of robustness. Nevertheless, our method consistently outperforms all baselines under clean, mild, and severe conditions, indicating improved stability across a wide range of inference-time corruptions.
\ding{174} To assess the statistical reliability of these improvements, we conduct the Wilcoxon signed-rank significance test across all experimental settings. The results show that the performance gains of our method are statistically significant at the 95\% confidence level for the vast majority of comparisons.

\begin{figure}[t]
    \begin{center}
    \centerline{\includegraphics[width=\columnwidth]{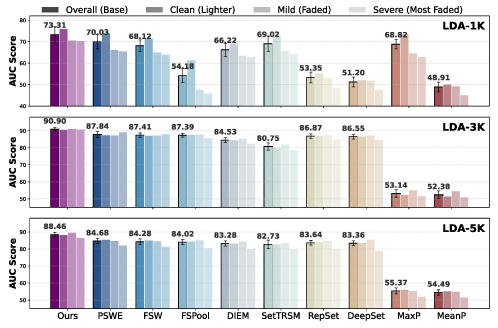}}
    \caption{Task III performance comparison. For each method, the leftmost black-bordered bar represents the overall score, with exact values annotated on top. The symbol ``\texttt{I}'' measures the difference between the clean and severe performances.}
    \label{fig:task3}
    \end{center}
\vspace{-0.5cm}
\end{figure}

\subsubsection{\textbf{Task III Evaluation}}
We conduct evaluations on three benchmark datasets, \textit{LDA-1k}, \textit{LDA-3k}, and \textit{LDA-5k}~\citep{zaheer2017deep}, which differ in both scale and topical coverage.
Tokens are embedded via word2vec~\citep{mikolov2013efficient} and scored by semantic similarity to the query, with ranking quality measured by AUC.
As shown in Figure~\ref{fig:task3}, \ding{172} our method consistently achieves the best overall AUC performance across all three datasets. 
Notably, this advantage is driven by strong performance under the clean data partitions, indicating that our approach effectively captures the core semantic structure required for topic expansion.
\ding{173} Beyond overall accuracy, our method also demonstrates superior robustness to corruption.
Across all datasets, the performance gap between the clean and severe conditions is relatively small compared to competing methods, as indicated by the I-bar markers. This effect is particularly pronounced on \textit{LDA-3k}, where our method exhibits the smallest clean--severe discrepancy, suggesting strong resilience to inference-time corruption.
On \textit{LDA-1k} and \textit{LDA-5k}, although the gap slightly increases, our approach remains highly competitive and consistent performance.
Detailed results are reported in Appendix~\ref{app:task3}.

\begin{table}[t]
\caption{Task IV performance comparison (\%).}
\label{tab:task_accuracy}
\centering
\small
\begin{minipage}{\linewidth}
\centering
\resizebox{\linewidth}{!}{%
\begin{tabular}{l|cccc}
\toprule
Method & Overall & Clean & Mild & Severe \\
\midrule
MaxP    & 48.27$\pm$0.55 & 55.43$\pm$0.67 & 50.18$\pm$0.84 & 27.53$\pm$1.08 \\
MeanP   & 49.99$\pm$0.58 & 56.83$\pm$0.42 & 52.28$\pm$0.64 & 29.46$\pm$0.83 \\
DeepSet & 48.49$\pm$0.85 & 56.12$\pm$0.65 & 51.29$\pm$0.91 & 25.21$\pm$1.18 \\
RepSet  & 49.52$\pm$0.59 & 57.19$\pm$0.61 & 51.82$\pm$0.78 & 26.88$\pm$0.96 \\
SetTRSM & 50.26$\pm$0.71 & 59.48$\pm$0.54 & 53.04$\pm$0.73 & 23.06$\pm$1.24 \\
DIEM    & 51.91$\pm$0.48 & 59.42$\pm$0.71 & 53.67$\pm$0.85 & 30.51$\pm$1.09 \\
FSPool  & \textbf{53.13$\pm$0.49} & \textbf{61.88$\pm$0.69} & 54.15$\pm$0.92 & 29.74$\pm$1.31 \\
FSW     & 52.15$\pm$0.65 & 59.85$\pm$0.55 & 53.45$\pm$0.78 & 30.95$\pm$0.95 \\
PSWE    & 51.21$\pm$0.42 & 57.76$\pm$0.27 & 53.49$\pm$0.53 & \underline{31.43$\pm$0.63} \\
\midrule
\textbf{Ours}  & \underline{53.03$\pm$0.89} & \underline{60.33$\pm$1.05} & \textbf{54.95$^*$ $\pm$1.19} & \textbf{31.89$^*$ $\pm$0.79} \\
\bottomrule
\end{tabular}
}
\end{minipage}
\end{table}

\subsubsection{\textbf{Task IV Evaluation}}
For Task IV, we evaluate on the large-scale NWPU-RESISC45 dataset~\citep{cheng2017remote}. 
We use the same MLP-based patch encoder in~\cite{pswe} and report classification accuracy as the evaluation metric.
\ding{172} In general, our model attains highly competitive overall accuracy and is effectively on par with the best-performing baseline.
\ding{173} Although our method ranks second on the clean subset, it outperforms most competitors on the mild and severe subsets. This indicates improved robustness: the model maintains higher accuracy under more challenging conditions, showing that the proposed method achieves state-of-the-art accuracy under varying test conditions.

\subsection{Empirical Analysis}
We evaluate our framework's design choices using the LiveJ dataset (introduced in Task 1).

\begin{table}[t]
    \centering
    \caption{Ablation study on LiveJ dataset. }
    \label{tab:ablation_study}
    \resizebox{\linewidth}{!}{
        \begin{tabular}{lcccccccc}
            \toprule
            \multirow{2}{*}{\texttt{W/o}} & \multicolumn{4}{c}{\textbf{Recall@10}} & \multicolumn{4}{c}{\textbf{NDCG@10}} \\
            \cmidrule(lr){2-5} \cmidrule(lr){6-9}
             & \textbf{Overall} & \textbf{Clean} & \textbf{Mild} & \textbf{Severe} & \textbf{Overall} & \textbf{Clean} & \textbf{Mild} & \textbf{Severe} \\
            \midrule
            \texttt{\model} 	& 72.58 & 73.45 & 72.22 & 70.96 & 74.40 & 75.23 & 74.14 & 72.69 \\
            \texttt{$f_{sw}$} 	& 69.55 & 70.23 & 69.17 & 68.43 & 70.77 & 71.38 & 70.77 & 69.23 \\
            \texttt{WBS}    	& 72.92 & 73.81 & 72.63 & 71.13 & 75.68 & 76.24 & 75.40 & 74.72 \\
            \textbf{Ours}       & \textbf{74.86} & \textbf{75.64} & \textbf{74.63} & \textbf{73.25} & \textbf{77.80} & \textbf{78.49} & \textbf{77.54} & \textbf{76.44} \\
            \bottomrule
        \end{tabular}}
\end{table}

\subsubsection{\textbf{Module-wise Ablation Study}}
We construct three ablation variants against the full model. The comparison results are presented in Table~\ref{tab:ablation_study}.
\begin{itemize}[leftmargin=*]
\item \texttt{w/o \model}: We eliminate our proposed \model~framework entirely. Instead, the model is directly optimized using standard Empirical Risk Minimization in Eq.~\eqref{eq:final_training_objective}. We notice that, the performance drop, particularly in the \textit{severe} corruption setting, indicates the critical role of our \model~frame in enhancing robustness.

\item \texttt{w/o $f_{sw}$}: We replace the proposed Sliced-Wasserstein-based set encoding mechanism with a rudimentary mean pooling strategy to assess the efficacy of the encoding scheme.  
This variant exhibits the most substantial degradation across all metrics. This result empirically verifies that the Wasserstein-based geometric awareness can capture complex set structures, which is more effective than simple aggregation strategy.

\item \texttt{w/o WBS}: We disable the Wasserstein Barycenter Synthesis module. Instead of synthesizing barycenters, we utilize set embeddings randomly sampled from the current batch as the optimization targets. 
The decline in performance suggests that our synthesized barycenters can effectively provide supervision signals compared to random in-batch samples.
\end{itemize}

\begin{table}[t]
    \centering
    \caption{Empirical study of our approximation strategies. Time represents the training time per epoch.}
    \label{tab:variants}
    \resizebox{0.95\linewidth}{!}{
        \begin{tabular}{p{2.5cm}ccccc}
            \toprule
            \textbf{Method} & \textbf{Overall} & \textbf{Clean} & \textbf{Mild} & \textbf{Severe} & \textbf{Time (min)} \\
            \midrule
            \texttt{Sinkhorn} & 74.81 & 75.52 & \textbf{74.71} & 73.20 & 21.5 \\
            \texttt{Disc-Adversary} & 74.45 & 75.18 & 74.01 & 73.27 & 18.4 \\
            \midrule
            \textbf{Ours}       & \textbf{74.86} & \textbf{75.64} & 74.63 & \textbf{73.25}  & \textbf{15.6} \\
           	\bottomrule
        \end{tabular}
    }
\end{table}

\subsubsection{\textbf{Study of Our Approximation Strategies}}
Our framework employs a twofold approximation strategies: We first adopt Sliced-Wasserstein metric for tractable Wasserstein geometry, and then we propose \textit{barycentric adversary} as a differentiable surrogate for the inner maximization in DRO. 
We thus conduct empirical studies to validate the effectiveness of these two designs.

\paragraph{Wasserstein Geometry Approximation.}
To evaluate the efficacy of our approximation, we substitute the SW metric with the standard Wasserstein metric. Given that the computational complexity of the original Wasserstein distance scales at $O(n^3 \log n)$ or higher~\cite{pele2009fast}, we adopt the Sinkhorn computation algorithm as a common practice~\cite{cuturi2013sinkhorn}. Consequently, we implement a Sinkhorn-based variant for both set encoding and the \model~formulation. 
As shown in Table~\ref{tab:variants}, while the \texttt{Sinkhorn} baseline achieves competitive overall performance, our SW-based method still outperforms it with significantly improved computational efficiency, i.e., 21.5 min vs. 15.6 min per epoch (27\% faster). This demonstrates that SW approximation provides a better trade-off between accuracy and efficiency for large-scale settings.

\paragraph{Barycentric Adversary for Optimization.}
The second approximation lies within the optimization process, where we employ a barycentric adversary as a surrogate. We compare our approach against the \textit{Discrete Adversary} strategy (corresponding to $\mathcal{L}_{Disc}$ in Proposition~\ref{prop:disc_le_bar}), which searches over all discrete sets in $\mathcal{G}$ to identify the worst-case instance for gradient estimation. 
As shown in Table~\ref{tab:variants}, our method outperforms \texttt{Disc-Adversary}, empirically validating Proposition~\ref{prop:disc_le_bar}, while achieving better training efficiency.

% \begin{figure}[t]
%     \begin{center}
%     \centerline{\includegraphics[width=\columnwidth]{figs/loss_comparison.pdf}}
%     \caption{Loss curves of ERM vs \model.}
%     \label{fig:loss}
%     \end{center}
% \end{figure}

\paragraph{Comparison with Random Combinatorial Search.}
To further examine whether an explicit search over corrupted set candidates can replace the barycentric adversary, we evaluate random combinatorial sampling (RCS) on the LiveJ dataset. RCS samples candidate corrupted sets in the discrete set space for $m$ rounds and uses high-loss candidates for robust training. We also evaluate an SW-constrained variant that filters sampled candidates by the local Sliced-Wasserstein distance. As shown in Table~\ref{tab:rcs}, increasing the number of sampled candidates substantially increases training time, while the performance gain remains limited. In contrast, our barycentric adversary does not require a sampling-round parameter and achieves 74.86 Recall@10 with 15.6 min per epoch, yielding a better accuracy-efficiency trade-off by optimizing continuously over simplex weights rather than sampling discrete corrupted sets.

\begin{table}[t]
\centering
\caption{Random combinatorial search analysis on LiveJ. R@10 denotes Recall@10, $m$ denotes the number of sampling rounds, and time is measured per epoch.}
\label{tab:rcs}
\resizebox{0.95\linewidth}{!}{
\begin{tabular}{p{2.5cm}cccccc}
\toprule
\multirow{2}{*}{\textbf{Method}} & \multicolumn{2}{c}{$\boldsymbol{m=2}$} & \multicolumn{2}{c}{$\boldsymbol{m=5}$} & \multicolumn{2}{c}{$\boldsymbol{m=10}$} \\
\cmidrule(lr){2-3} \cmidrule(lr){4-5} \cmidrule(lr){6-7}
 & \textbf{R@10} & \textbf{Time} & \textbf{R@10} & \textbf{Time} & \textbf{R@10} & \textbf{Time} \\
\midrule
\texttt{RCS}
& 69.45 & 21.7
& 70.56 & 23.5
& 71.13 & 27.4 \\
\texttt{RCS w/ SW}
& 69.52 & 22.6
& 71.81 & 24.8
& 71.74 & 28.1 \\
\bottomrule
\end{tabular}
}
\end{table}

% \paragraph{Optimization Loss Curves.}
% We zoom into the first 10 epochs to compare the loss curves of ERM and ours.
% As shown in Figure~\ref{fig:loss}, compared with ERM, \model~shows a consistently higher training loss, which is expected since it optimizes a more conservative worst-case objective.
% Despite the higher loss scale, \model~converges steadily with a clear downward trend across steps/epochs. Occasional stage-wise spikes appear, but they are transient and do not affect overall stability or convergence.

\begin{table}[t]
    \centering
    \caption{Comparison with other DRO variants.}
    \label{tab:dro_variants}
    \resizebox{0.95\linewidth}{!}{
        \begin{tabular}{p{2.5cm}ccccc}
            \toprule
            \textbf{Method} & \textbf{Overall} & \textbf{Clean} & \textbf{Mild} & \textbf{Severe} & \textbf{Time (min)} \\
            \midrule
            \texttt{WDRO} & 74.72 & 75.32 & 74.21 & 73.10 & 17.5 \\
            \texttt{KL-DRO}    & 74.45 & 74.92 & 74.14 & 73.87 & \textbf{14.2} \\
            \texttt{MMD-DRO} & 74.09 & 74.86 & 73.60 & 73.07 & 18.7 \\
            \midrule
            \textbf{Ours}       & \textbf{74.86} & \textbf{75.64} & \textbf{74.63} & \textbf{73.25} & 15.6 \\
           	\bottomrule
        \end{tabular}
    }
\end{table}

\paragraph{Cardinality-wise Scalability.}
We further evaluate scalability by grouping LiveJ data instances according
to the input set cardinality, i.e., the number of elements contained in
each input set. Each cardinality group corresponds to a range of set
sizes, such as sets with 1--10 elements or more than 30 elements.
As shown in Table~\ref{tab:cardinality_main}, SW-DRSO maintains consistent Recall@10 gains across all cardinality groups, while its training time grows moderately with set size. This suggests that the barycentric inner optimization introduces manageable overhead beyond the base SW encoder.

\begin{table}[t]
\centering
\caption{Cardinality-wise scalability on LiveJ. Cardinality groups are defined by the number of elements in each input set. R@10 denotes Recall@10, and time is measured per epoch.}
\label{tab:cardinality_main}
\resizebox{\linewidth}{!}{
\begin{tabular}{p{2.2cm}cccccccc}
\toprule
\multirow{2}{*}{\textbf{Method}}
& \multicolumn{2}{c}{$\boldsymbol{[1,10]}$}
& \multicolumn{2}{c}{$\boldsymbol{[11,20]}$}
& \multicolumn{2}{c}{$\boldsymbol{[20,30]}$}
& \multicolumn{2}{c}{$\boldsymbol{>30}$} \\
\cmidrule(lr){2-3} \cmidrule(lr){4-5} \cmidrule(lr){6-7} \cmidrule(lr){8-9}
& \textbf{R@10} & \textbf{Time}
& \textbf{R@10} & \textbf{Time}
& \textbf{R@10} & \textbf{Time}
& \textbf{R@10} & \textbf{Time} \\
\midrule
\texttt{FSW}
& 64.24 & 11.7
& 64.18 & 11.9
& 64.32 & 12.2
& 64.30 & 12.7 \\
\texttt{PSWE}
& 71.44 & 12.1
& 71.81 & 12.6
& 71.76 & 13.2
& 71.92 & 13.9 \\
\midrule
\textbf{Ours}
& \textbf{74.65} & 14.6
& \textbf{74.91} & 15.9
& \textbf{74.77} & 16.7
& \textbf{74.81} & 17.1 \\
\bottomrule
\end{tabular}
}
\end{table}

% The additional simplex weights used by the barycentric adversary are
% training-time optimization variables rather than persistent model
% parameters. They are discarded after training, and inference only
% requires a standard forward pass $g(f_{\mathrm{SW}}(S))$ without
% neighbor search or projected ascent.

\subsubsection{\textbf{Comparison with Other DRO Variants}}
We benchmark \model~against classical DRO variants, including WDRO~\cite{gao2024wasserstein}, KL-DRO~\cite{namkoong2016stochastic}, and MMD-DRO~\cite{staib2019distributionally}. 
Detailed descriptions are provided in Appendix~\ref{app:dro}.
We implement them on our encoder architecture by replacing the distance metric used in the local neighbor pool construction (Section~\ref{sec:wbs}) with their respective divergence measures. 
As shown in Table~\ref{tab:dro_variants}, our \model~performs the best while maintaining competitive training efficiency. 
WDRO and MMD-DRO suffer from significant computational overhead while delivering inferior accuracy. 
We attribute this to the geometric alignment between our Wasserstein-based DRO formulation and Sliced-Wasserstein encoder, which enables effective robust optimization, unlike the geometric mismatches in other variants.

\section{Conclusion}
\enlargethispage{2\baselineskip}

In this work, we addressed the challenge of Inference-time Element Corruption in Set Representation Learning, an issue where deployed models face sparse degradations typically absent from training data. 
We proposed \model, a scalable distributionally robust optimization framework that leverages Sliced-Wasserstein geometry to define a tractable ambiguity region. 
By introducing a barycentric adversary, we transformed the computationally prohibitive search for worst-case perturbations into an efficient optimization process. 
Extensive experiments across diverse tasks and data modalities demonstrate that \model~effectively enhances robustness against severe corruption while maintaining competitive accuracy on clean inputs. 
A promising future direction is to extend such robust set optimization to LLM- and agent-centric settings where robustness is increasingly required under distributional changes~\cite{he2026search,zhang2026coevoskills,wu2026gam,huang2025deepresearchguard,chen2026embracing,luo2026atelierevalagenticevaluationhumans,luo2026centaurevalbenchmarkinghumanintheloopvalue}.

\section*{Acknowledgements}
This work is supported in part by NSF under grants III-2106758, and POSE-2346158.

\section*{Impact Statement}
This work addresses a practical reliability gap in Set Representation Learning: element corruptions at inference time may disproportionately distort pooled set embeddings and cause downstream failures. 
To mitigate this, we propose a scalable distributionally robust optimization framework that represents sets as empirical measures and defines tractable corruption regions, enabling robustness without discrete enumeration of corrupted set variants.  
We further introduce a barycentric data synthesis strategy that turns the otherwise difficult inner worst-case search into efficient parameterization optimization.  
Based on our proposed method, we aim to improve the operational robustness of set-based models, reducing failure rates when inputs are incomplete, noisy, or adversarially perturbed.

% In the unusual situation where you want a paper to appear in the
% references without citing it in the main text, use \nocite
% \nocite{langley00}

\bibliography{ref}

@article{zaheer2017deep,
  title={Deep sets},
  author={Zaheer, Manzil and Kottur, Satwik and Ravanbakhsh, Siamak and Poczos, Barnabas and Salakhutdinov, Russ R and Smola, Alexander J},
  journal={Advances in neural information processing systems},
  volume={30},
  year={2017}
}

@article{mikolov2013efficient,
  title={Efficient estimation of word representations in vector space},
  author={Mikolov, Tomas},
  journal={ICLR},
  volume={3781},
  year={2013}
}

@article{pswe,
  title={Pooling by sliced-Wasserstein embedding},
  author={Naderializadeh, Navid and Comer, Joseph F and Andrews, Reed and Hoffmann, Heiko and Kolouri, Soheil},
  journal={NeurIPS},
  volume={34},
  pages={3389--3400},
  year={2021}
}

@inproceedings{lee2019set,
  title={Set transformer: A framework for attention-based permutation-invariant neural networks},
  author={Lee, Juho and Lee, Yoonho and Kim, Jungtaek and Kosiorek, Adam and Choi, Seungjin and Teh, Yee Whye},
  booktitle={ICML},
  pages={3744--3753},
  year={2019},
  organization={PMLR}
}

@article{agueh2011barycenters,
  title={Barycenters in the Wasserstein space},
  author={Agueh, Martial and Carlier, Guillaume},
  journal={SIAM Journal on Mathematical Analysis},
  volume={43},
  number={2},
  pages={904--924},
  year={2011},
  publisher={SIAM}
}

@article{zhang2019fspool,
  title={Fspool: Learning set representations with featurewise sort pooling},
  author={Zhang, Yan and Hare, Jonathon and Pr{\"u}gel-Bennett, Adam},
  journal={ICLR},
  year={2020}
}

@article{rahimian2019distributionally,
  title={Distributionally robust optimization: A review},
  author={Rahimian, Hamed and Mehrotra, Sanjay},
  journal={arXiv preprint arXiv:1908.05659},
  year={2019}
}

@techreport{scarf1957min,
  title={A min-max solution of an inventory problem},
  author={Scarf, Herbert E and Arrow, KJ and Karlin, S},
  year={1957},
  institution={Rand Corporation Santa Monica}
}

@article{muandet2012learning,
  title={Learning from distributions via support measure machines},
  author={Muandet, Krikamol and Fukumizu, Kenji and Dinuzzo, Francesco and Sch{\"o}lkopf, Bernhard},
  journal={Advances in neural information processing systems},
  volume={25},
  year={2012}
}

@inproceedings{edwards2017towards,
  title={Towards a Neural Statistician},
  author={Edwards, Harrison and Storkey, Amos},
  booktitle={5th International Conference on Learning Representations},
  pages={1--13},
  year={2017}
}

@book{peyre2019computational,
  title={Computational optimal transport: With applications to data science},
  author={Peyr{\'e}, Gabriel and Cuturi, Marco},
  year={2019},
  publisher={Now Foundations and Trends}
}

@book{shalev2014understanding,
  title={Understanding Machine Learning: From Theory to Algorithms},
  author={Shalev-Shwartz, Shai and Ben-David, Shai},
  year={2014},
  publisher={Cambridge University Press}
}

@article{bartlett2017spectrally,
  title={Spectrally-normalized margin bounds for neural networks},
  author={Bartlett, Peter L. and Foster, Dylan J. and Telgarsky, Matus},
  journal={Advances in Neural Information Processing Systems},
  volume={30},
  year={2017}
}

@article{cheng2017remote,
  title={Remote sensing image scene classification: Benchmark and state of the art},
  author={Cheng, Gong and Han, Junwei and Lu, Xiaoqiang},
  journal={Proceedings of the IEEE},
  volume={105},
  number={10},
  pages={1865--1883},
  year={2017},
  publisher={IEEE}
}

@article{DBLP:journals/kais/YangL15,
  author       = {Jaewon Yang and
                  Jure Leskovec},
  title        = {Defining and evaluating network communities based on ground-truth},
  journal      = {Knowl. Inf. Syst.},
  volume       = {42},
  number       = {1},
  pages        = {181--213},
  year         = {2015}
}

@inproceedings{DBLP:conf/imc/MisloveMGDB07,
  author       = {Alan Mislove and
                  Massimiliano Marcon and
                  P. Krishna Gummadi and
                  Peter Druschel and
                  Bobby Bhattacharjee},
  editor       = {Constantine Dovrolis and
                  Matthew Roughan},
  title        = {Measurement and analysis of online social networks},
  booktitle    = {Proceedings of SIGCOMM},
  pages        = {29--42}
}

@inproceedings{wu20153d,
  title={3d shapenets: A deep representation for volumetric shapes},
  author={Wu, Zhirong and Song, Shuran and Khosla, Aditya and Yu, Fisher and Zhang, Linguang and Tang, Xiaoou and Xiao, Jianxiong},
  booktitle={Proceedings of the IEEE CVPR},
  pages={1912--1920},
  year={2015}
}

@article{lin2013network,
  title={Network in network},
  author={Lin, Min and Chen, Qiang and Yan, Shuicheng},
  journal={arXiv preprint arXiv:1312.4400},
  year={2013}
}

@inproceedings{adam,
  title={Adam: A method for stochastic optimization},
  author={Kingma, Diederik P and Ba, Jimmy},
  booktitle = {ICLR},
  year      = {2015}
}

@article{levitin1966constrained,
  title={Constrained minimization methods},
  author={Levitin, Evgeny S and Polyak, Boris T},
  journal={USSR Computational mathematics and mathematical physics},
  volume={6},
  number={5},
  pages={1--50},
  year={1966},
  publisher={Elsevier}
}

@inproceedings{skianis2020rep,
  title={Rep the set: Neural networks for learning set representations},
  author={Skianis, Konstantinos and Nikolentzos, Giannis and Limnios, Stratis and Vazirgiannis, Michalis},
  booktitle={AISTATS},
  pages={1410--1420},
  year={2020},
  organization={PMLR}
}

@inproceedings{kim2022differentiable,
  title={Differentiable expectation-maximization for set representation learning},
  author={Kim, Minyoung},
  booktitle={International Conference on Learning Representations},
  year={2022}
}

@inproceedings{amirfourier,
  title={Fourier Sliced-Wasserstein Embedding for Multisets and Measures},
  author={Amir, Tal and Dym, Nadav},
  booktitle={The Thirteenth International Conference on Learning Representations},
  year={2025}
}

@article{li2021les3,
  title={LES3: learning-based exact set similarity search},
  author={Li, Yifan and Yu, Xiaohui and Koudas, Nick},
  journal={Proceedings of VLDB},
  volume={14},
  number={11},
  pages={2073--2086},
  year={2021}
}

@inproceedings{murphy2018janossy,
  title={Janossy Pooling: Learning Deep Permutation-Invariant Functions for Variable-Size Inputs},
  author={Murphy, Ryan L and Srinivasan, Balasubramaniam and Rao, Vinayak and Ribeiro, Bruno},
  booktitle={ICLR},
  year={2018}
}

@inproceedings{zhanglearning,
  title={Learning Representations of Sets through Optimized Permutations},
  author={Zhang, Yan and Hare, Jonathon and Pr{\"u}gel-Bennett, Adam},
  booktitle={ICLR},
  year={2019}
}

@article{rezatofighi2018deep,
  title={Deep perm-set net: Learn to predict sets with unknown permutation and cardinality using deep neural networks},
  author={Rezatofighi, S Hamid and Kaskman, Roman and Motlagh, Farbod T and Shi, Qinfeng and Cremers, Daniel and Leal-Taix{\'e}, Laura and Reid, Ian},
  journal={arXiv preprint arXiv:1805.00613},
  year={2018}
}

@article{vaswani2017attention,
  title={Attention is all you need},
  author={Vaswani, Ashish and Shazeer, Noam and Parmar, Niki and Uszkoreit, Jakob and Jones, Llion and Gomez, Aidan N and Kaiser, {\L}ukasz and Polosukhin, Illia},
  journal={NeurIPS},
  volume={30},
  year={2017}
}

@inproceedings{jaegle2021perceiver,
  title={Perceiver: General perception with iterative attention},
  author={Jaegle, Andrew and Gimeno, Felix and Brock, Andy and Vinyals, Oriol and Zisserman, Andrew and Carreira, Joao},
  booktitle={ICML},
  pages={4651--4664},
  year={2021},
  organization={PMLR}
}

@inproceedings{mialon2021trainable,
  title={A Trainable Optimal Transport Embedding for Feature Aggregation and its Relationship to Attention},
  author={Mialon, Gr{\'e}goire and Chen, Dexiong and d'Aspremont, Alexandre and Mairal, Julien},
  booktitle={ICLR},
  year={2021}
}

@inproceedings{lee2022set2box,
  title={Set2Box: Similarity Preserving Representation Learning for Sets},
  author={Lee, Geon and Park, Chanyoung and Shin, Kijung},
  booktitle={ICDM},
  pages={1023--1028},
  year={2022},
  organization={IEEE}
}

@article{xu2025fuse,
  title={FUSE: Measure-theoretic compact fuzzy set representation for taxonomy expansion},
  author={Xu, Fred and Jiang, Song and Huang, Zijie and Luo, Xiao and Zhang, Shichang and Chen, Adrian and Sun, Yizhou},
  journal={arXiv preprint arXiv:2506.08409},
  year={2025}
}

@article{guo2021learning,
  title={Learning prototype-oriented set representations for meta-learning},
  author={Guo, Dandan and Tian, Long and Zhang, Minghe and Zhou, Mingyuan and Zha, Hongyuan},
  journal={arXiv preprint arXiv:2110.09140},
  year={2021}
}

@inproceedings{leeself,
  title={Self-Supervised Set Representation Learning for Unsupervised Meta-Learning},
  author={Lee, Dong Bok and Lee, Seanie and Kawaguchi, Kenji and Kim, Yunji and Bang, Jihwan and Ha, Jung-Woo and Hwang, Sung Ju},
  booktitle={ICLR},
  year={2023}
}

@article{vargas2025quantum,
  title={Quantum deep sets and sequences},
  author={Vargas-Calder{\'o}n, Vladimir},
  journal={Quantum Machine Intelligence},
  volume={7},
  number={2},
  pages={65},
  year={2025},
  publisher={Springer}
}

@article{naderializadeh2025aggregating,
  title={Aggregating residue-level protein language model embeddings with optimal transport},
  author={NaderiAlizadeh, Navid and Singh, Rohit},
  journal={Bioinformatics Advances},
  volume={5},
  number={1},
  pages={vbaf060},
  year={2025},
  publisher={Oxford University Press}
}

@incollection{kuhn2019wasserstein,
  title={Wasserstein distributionally robust optimization: Theory and applications in machine learning},
  author={Kuhn, Daniel and Esfahani, Peyman Mohajerin and Nguyen, Viet Anh and Shafieezadeh-Abadeh, Soroosh},
  booktitle={Operations research \& management science in the age of analytics},
  pages={130--166},
  year={2019},
  publisher={Informs}
}

@article{rahimian2022frameworks,
  title={Frameworks and results in distributionally robust optimization},
  author={Rahimian, Hamed and Mehrotra, Sanjay},
  journal={Open Journal of Mathematical Optimization},
  volume={3},
  pages={1--85},
  year={2022}
}

@article{bertsimas2019adaptive,
  title={Adaptive distributionally robust optimization},
  author={Bertsimas, Dimitris and Sim, Melvyn and Zhang, Meilin},
  journal={Management Science},
  volume={65},
  number={2},
  pages={604--618},
  year={2019},
  publisher={INFORMS}
}

@article{wang2025knowledge,
  title={Knowledge-Guided Wasserstein Distributionally Robust Optimization},
  author={Wang, Zitao and Wang, Ziyuan and Liu, Molei and Si, Nian},
  booktitle={Forty-second International Conference on Machine Learning},
  year={2025}
}

@inproceedings{nguyenbeyond,
  title={Beyond Minimax Rates in Group Distributionally Robust Optimization via a Novel Notion of Sparsity},
  author={Nguyen, Quan M and Mehta, Nishant A and Guzm{\'a}n, Crist{\'o}bal A},
  booktitle={Forty-second International Conference on Machine Learning},
  year={2025}
}

@inproceedings{ma2024differentiable,
  title={Differentiable Distributionally Robust Optimization Layers},
  author={Ma, Xutao and Ning, Chao and Du, Wenli},
  booktitle={International Conference on Machine Learning},
  pages={33880--33901},
  year={2024},
  organization={PMLR}
}

@inproceedings{yu2024efficient,
  title={Efficient Algorithms for Empirical Group Distributionally Robust Optimization and Beyond},
  author={Yu, Dingzhi and Cai, Yunuo and Jiang, Wei and Zhang, Lijun},
  booktitle={International Conference on Machine Learning},
  pages={57384--57414},
  year={2024},
  organization={PMLR}
}

@inproceedings{inatsu2022bayesian,
  title={Bayesian optimization for distributionally robust chance-constrained problem},
  author={Inatsu, Yu and Takeno, Shion and Karasuyama, Masayuki and Takeuchi, Ichiro},
  booktitle={International Conference on Machine Learning},
  pages={9602--9621},
  year={2022},
  organization={PMLR}
}

@article{husain2023distributionally,
  title={Distributionally Robust Bayesian Optimization with $\phi$-divergences},
  author={Husain, Hisham and Nguyen, Vu and van den Hengel, Anton},
  journal={Advances in Neural Information Processing Systems},
  volume={36},
  pages={20133--20145},
  year={2023}
}

@article{zhou2020planning,
  title={Planning PEV fast-charging stations using data-driven distributionally robust optimization approach based on $\phi$-divergence},
  author={Zhou, Bo and Chen, Guo and Huang, Tingwen and Song, Qiankun and Yuan, Yuefei},
  journal={IEEE Transactions on Transportation Electrification},
  volume={6},
  number={1},
  pages={170--180},
  year={2020},
  publisher={IEEE}
}

@article{gao2024wasserstein,
  title={Wasserstein distributionally robust optimization and variation regularization},
  author={Gao, Rui and Chen, Xi and Kleywegt, Anton J},
  journal={Operations Research},
  volume={72},
  number={3},
  pages={1177--1191},
  year={2024},
  publisher={Informs}
}

@inproceedings{kwon2020principled,
  title={Principled learning method for wasserstein distributionally robust optimization with local perturbations},
  author={Kwon, Yongchan and Kim, Wonyoung and Won, Joong-Ho and Paik, Myunghee Cho},
  booktitle={International Conference on Machine Learning},
  pages={5567--5576},
  year={2020},
  organization={PMLR}
}

@inproceedings{micheli2025wasserstein,
  title={Wasserstein Distributionally Robust Bayesian Optimization with Continuous Context},
  author={Micheli, Francesco and Balta, Efe C and Tsiamis, Anastasios and Lygeros, John},
  booktitle={International Conference on Artificial Intelligence and Statistics},
  pages={4978--4986},
  year={2025},
  organization={PMLR}
}

@inproceedings{zhang2025revisiting,
  title={Revisiting large-scale non-convex distributionally robust optimization},
  author={Zhang, Qi and Zhou, Yi and Khan, Simon and Prater-Bennette, Ashley and Shen, Lixin and Zou, Shaofeng},
  booktitle={The Thirteenth International Conference on Learning Representations},
  year={2025}
}

@article{sagawa2019distributionally,
  title={Distributionally robust neural networks for group shifts: On the importance of regularization for worst-case generalization},
  author={Sagawa, Shiori and Koh, Pang Wei and Hashimoto, Tatsunori B and Liang, Percy},
  journal={arXiv preprint arXiv:1911.08731},
  year={2019}
}

@article{namkoong2016stochastic,
  title={Stochastic gradient methods for distributionally robust optimization with f-divergences},
  author={Namkoong, Hongseok and Duchi, John C},
  journal={Advances in neural information processing systems},
  volume={29},
  year={2016}
}

@article{staib2019distributionally,
  title={Distributionally robust optimization and generalization in kernel methods},
  author={Staib, Matthew and Jegelka, Stefanie},
  journal={Advances in Neural Information Processing Systems},
  volume={32},
  year={2019}
}

@article{cuturi2013sinkhorn,
  title={Sinkhorn distances: Lightspeed computation of optimal transport},
  author={Cuturi, Marco},
  journal={Advances in neural information processing systems},
  volume={26},
  year={2013}
}

@inproceedings{pele2009fast,
  title={Fast and robust earth mover distances},
  author={Pele, Ofir and Werman, Michael},
  booktitle={2009 IEEE 12th International Conference on Computer Vision},
  pages={460--467},
  year={2009},
  organization={IEEE}
}

@inproceedings{qi2017pointnet,
  title={Pointnet: Deep learning on point sets for 3d classification and segmentation},
  author={Qi, Charles R and Su, Hao and Mo, Kaichun and Guibas, Leonidas J},
  booktitle={Proceedings of the IEEE CVPR},
  pages={652--660},
  year={2017}
}

@article{kolouri2019generalized,
  title={Generalized sliced wasserstein distances},
  author={Kolouri, Soheil and Nadjahi, Kimia and Simsekli, Umut and Badeau, Roland and Rohde, Gustavo},
  journal={NeurIPS},
  volume={32},
  year={2019}
}

@article{bonneel2015sliced,
  title={Sliced and radon wasserstein barycenters of measures},
  author={Bonneel, Nicolas and Rabin, Julien and Peyr{\'e}, Gabriel and Pfister, Hanspeter},
  journal={JMIV},
  volume={51},
  number={1},
  pages={22--45},
  year={2015},
  publisher={Springer}
}

@article{rusu2008towards,
  title={Towards 3D point cloud based object maps for household environments},
  author={Rusu, Radu Bogdan and Marton, Zoltan Csaba and Blodow, Nico and Dolha, Mihai and Beetz, Michael},
  journal={Robotics and Autonomous Systems},
  volume={56},
  number={11},
  pages={927--941},
  year={2008},
  publisher={Elsevier}
}

@inproceedings{zhou2022understanding,
  title={Understanding the robustness in vision transformers},
  author={Zhou, Daquan and Yu, Zhiding and Xie, Enze and Xiao, Chaowei and Anandkumar, Animashree and Feng, Jiashi and Alvarez, Jose M},
  booktitle={International conference on machine learning},
  pages={27378--27394},
  year={2022},
  organization={PMLR}
}

@article{kantorovich2006translocation,
  title={On the translocation of masses},
  author={Kantorovich, L. V.},
  journal={Journal of mathematical sciences},
  volume={133},
  number={4},
  year={2006}
}

@inproceedings{liu2025provable,
  title={Provable robust overfitting mitigation in Wasserstein distributionally robust optimization},
  author={Liu, S. and Wang, Y. and Zhu, Y. and Miao, Y. and Gao, X.-S.},
  booktitle={The Thirteenth International Conference on Learning Representations},
  year={2025}
}

@inproceedings{huang2025effective,
  title={An effective manifold-based optimization method for distributionally robust classification},
  author={Huang, J. and Ding, H.},
  booktitle={The Thirteenth International Conference on Learning Representations},
  year={2025}
}

@inproceedings{wurobust,
  title={Towards robust alignment of language models: Distributionally robustifying direct preference optimization},
  author={Wu, J. and Xie, Y. and Yang, Z. and Wu, J. and Chen, J. and Gao, J. and Ding, B. and Wang, X. and He, X.},
  booktitle={The Thirteenth International Conference on Learning Representations},
  year={2025}
}

@article{zhang2022knowledge,
  title={Knowledge-aware neural networks with personalized feature referencing for cold-start recommendation},
  author={Zhang, Xinni and Chen, Yankai and Gao, Cuiyun and Liao, Qing and Zhao, Shenglin and King, Irwin},
  journal={arXiv preprint arXiv:2209.13973},
  year={2022}
}

@inproceedings{chenadversarial,
  title={Adversarial Encoding Perturbation and Synthesis for Set Representation Auxiliary Learning},
  author={Chen, Yankai and Zhang, Xinni and Zou, Henry Peng and He, Bowei and Li, Yangning and Yu, Philip S and King, Irwin and Liu, Xue},
  booktitle={The Fourteenth International Conference on Learning Representations},
  year={2026}
}

@article{chen2026embracing,
  title={Embracing Trustworthy Brain-Agent Collaboration as Paradigm Extension for Intelligent Assistive Technologies},
  author={Chen, Yankai and Zhang, Xinni and Zhang, Yifei and Li, Yangning and Zou, Henry and Miao, Chunyu and Zhang, Weizhi and Liu, Steve Xue and Yu, Philip S},
  journal={Advances in Neural Information Processing Systems},
  volume={38},
  year={2026}
}

@inproceedings{chen2023star,
title={Topological Representation Learning for E-commerce Shopping Behaviors},
author={Chen, Yankai and Truong, Quoc-Tuan and Shen, Xin and Wang, Ming and Li, Jin and Chan, Jim and King, Irwin},
journal={Proceedings of the 19th International Workshop on Mining and Learning with Graphs (MLG)},
year={2023}
}

@article{huang2025deepresearchguard,
  title={Deepresearchguard: Deep research with open-domain evaluation and multi-stage guardrails for safety},
  author={Huang, Wei-Chieh and Zou, Henry Peng and Wu, Yaozu and Li, Dongyuan and Chen, Yankai and Zhang, Weizhi and Li, Yangning and Zangari, Angelo and Guo, Jizhou and Miao, Chunyu and others},
  journal={arXiv preprint arXiv:2510.10994},
  year={2025}
}

@article{hihpq,
  title={HiHPQ: Hierarchical Hyperbolic Product Quantization for Unsupervised Image Retrieval}, 
  author={Qiu, Zexuan and Liu, Jiahong and Chen, Yankai and King, Irwin},
  journal={Proceedings of the 38th AAAI Conference on Artificial Intelligence (AAAI)},
  year={2024}
}

@inproceedings{chen2024deep,
  title={Deep Structural Knowledge Exploitation and Synergy for Estimating Node Importance Value on Heterogeneous Information Networks},
  author={Chen, Yankai and Fang, Yixiang and Wang, Qiongyan and Cao, Xin and King, Irwin},
  booktitle={Proceedings of the AAAI Conference on Artificial Intelligence},
  volume={38},
  number={8},
  pages={8302--8310},
  year={2024}
}

@article{zhang2026coevoskills,
  title={Coevoskills: Self-evolving agent skills via co-evolutionary verification},
  author={Zhang, Hanrong and Fan, Shicheng and Zou, Henry Peng and Chen, Yankai and Wang, Zhenting and Zhou, Jiayu and Li, Chengze and Huang, Wei-Chieh and Yao, Yifei and Zheng, Kening and others},
  journal={arXiv preprint arXiv:2604.01687},
  year={2026}
}

@inproceedings{chen2022modeling,
  title={Modeling scale-free graphs with hyperbolic geometry for knowledge-aware recommendation},
  author={Chen, Yankai and Yang, Menglin and Zhang, Yingxue and Zhao, Mengchen and Meng, Ziqiao and Hao, Jianye and King, Irwin},
  booktitle={Proceedings of the fifteenth ACM international conference on web search and data mining},
  pages={94--102},
  year={2022}
}

@inproceedings{chen2022attentive,
  title={Attentive knowledge-aware graph convolutional networks with collaborative guidance for personalized recommendation},
  author={Chen, Yankai and Yang, Yaming and Wang, Yujing and Bai, Jing and Song, Xiangchen and King, Irwin},
  booktitle={2022 IEEE 38th international conference on data engineering (ICDE)},
  pages={299--311},
  year={2022},
  organization={IEEE}
}

@inproceedings{he2023dynamically,
  title={Dynamically expandable graph convolution for streaming recommendation},
  author={He, Bowei and He, Xu and Zhang, Yingxue and Tang, Ruiming and Ma, Chen},
  booktitle={Proceedings of the ACM web conference 2023},
  pages={1457--1467},
  year={2023}
}

@inproceedings{he2023dynamic,
  title={Dynamic embedding size search with minimum regret for streaming recommender system},
  author={He, Bowei and He, Xu and Zhang, Renrui and Zhang, Yingxue and Tang, Ruiming and Ma, Chen},
  booktitle={Proceedings of the 32nd ACM International Conference on Information and Knowledge Management},
  pages={741--750},
  year={2023}
}

@article{he2026search,
  title={Search-R2: Enhancing Search-Integrated Reasoning via Actor-Refiner Collaboration},
  author={He, Bowei and Hu, Minda and Xu, Zenan and Wang, Hongru and Zong, Licheng and Chen, Yankai and Ma, Chen and Liu, Xue and Zhou, Pluto and King, Irwin},
  journal={arXiv preprint arXiv:2602.03647},
  year={2026}
}

@article{he2026preserving,
  title={Preserving llm capabilities through calibration data curation: From analysis to optimization},
  author={He, Bowei and Yin, Lihao and Zhen, Hui-Ling and Liu, Shuqi and Wu, Han and Zhang, Xiaokun and Yuan, Mingxuan and Ma, Chen},
  journal={Advances in Neural Information Processing Systems},
  volume={38},
  pages={58531--58572},
  year={2026}
}

@inproceedings{he2024interpretable,
  title={Interpretable triplet importance for personalized ranking},
  author={He, Bowei and Ma, Chen},
  booktitle={Proceedings of the 33rd ACM International Conference on Information and Knowledge Management},
  pages={809--818},
  year={2024}
}

@article{he2026pedagogically,
  title={Pedagogically-Inspired Data Synthesis for Language Model Knowledge Distillation},
  author={He, Bowei and Chen, Yankai and Zhang, Xiaokun and Kong, Linghe and Yu, Philip S and Liu, Xue and Ma, Chen},
  journal={arXiv preprint arXiv:2602.12172},
  year={2026}
}

@article{he2025paser,
  title={PASER: Post-training data selection for efficient pruned large language model recovery},
  author={He, Bowei and Yin, Lihao and Zhen, Hui-Ling and Zhang, Xiaokun and Yuan, Mingxuan and Ma, Chen},
  journal={arXiv preprint arXiv:2502.12594},
  year={2025}
}

@inproceedings{chen2023sim2rec,
  title={Sim2rec: A simulator-based decision-making approach to optimize real-world long-term user engagement in sequential recommender systems},
  author={Chen, Xiong-Hui and He, Bowei and Yu, Yang and Li, Qingyang and Qin, Zhiwei and Shang, Wenjie and Ye, Jieping and Ma, Chen},
  booktitle={2023 IEEE 39th International Conference on Data Engineering (ICDE)},
  pages={3389--3402},
  year={2023},
  organization={IEEE}
}

@article{luo2025recranker,
  title={Recranker: Instruction tuning large language model as ranker for top-k recommendation},
  author={Luo, Sichun and He, Bowei and Zhao, Haohan and Shao, Wei and Qi, Yanlin and Huang, Yinya and Zhou, Aojun and Yao, Yuxuan and Li, Zongpeng and Xiao, Yuanzhang and others},
  journal={ACM Transactions on Information Systems},
  volume={43},
  number={5},
  pages={1--31},
  year={2025},
  publisher={ACM New York, NY}
}

@article{luo2026integrating,
  title={Integrating large language models into recommendation via mutual augmentation and adaptive aggregation},
  author={Luo, Sichun and Yao, Yuxuan and He, Bowei and Shao, Wei and Xu, Jian and Huang, Yinya and Zhou, Aojun and Zhang, Xinyi and Xiao, Yuanzhang and Hou, Hanxu and others},
  journal={IEEE Journal of Selected Topics in Signal Processing},
  year={2026},
  publisher={IEEE}
}

@inproceedings{cui2024context,
  title={Context matters: Enhancing sequential recommendation with context-aware diffusion-based contrastive learning},
  author={Cui, Ziqiang and Wu, Haolun and He, Bowei and Cheng, Ji and Ma, Chen},
  booktitle={Proceedings of the 33rd ACM International Conference on Information and Knowledge Management},
  pages={404--414},
  year={2024}
}

@article{cui2026semantic,
  title={Semantic retrieval augmented contrastive learning for sequential recommendation},
  author={Cui, Ziqiang and Weng, Yunpeng and Tang, Xing and Zhang, Xiaokun and Li, Shiwei and Liu, Peiyang and He, Bowei and Liu, Dugang and Luo, Weihong and He, Xiuqiang and others},
  journal={Advances in Neural Information Processing Systems},
  volume={38},
  pages={162621--162645},
  year={2026}
}

@inproceedings{zhang2026token,
  title={From Token to Item: Enhancing Large Language Models for Recommendation via Item-aware Attention Mechanism},
  author={Zhang, Xiaokun and He, Bowei and Chen, Jiamin and Cui, Ziqiang and Ma, Chen},
  booktitle={Proceedings of the ACM Web Conference 2026},
  pages={6700--6708},
  year={2026}
}

@inproceedings{wu2026gam,
  title={GAM: Hierarchical Graph-based Agentic Memory for LLM Agents},
  author={Wu, Zhaofen and Zhang, Hanrong and Lin, Fulin and Xu, Wujiang and Xu, Xinran and Chen, Yankai and Zou, Henry Peng and Chen, Shaowen and Zhang, Weizhi and Liu, Xue and others},
  booktitle={Proceedings of the 64th Annual Meeting of the Association for Computational Linguistics (ACL)},
  year={2026}
}

@article{li2026dyg,
  title={Dyg-mamba: Continuous state space modeling on dynamic graphs},
  author={Li, Dongyuan and Tan, Shiyin and Zhang, Ying and Jin, Ming and Pan, Shirui and Okumura, Manabu and Jiang, Renhe},
  journal={Advances in Neural Information Processing Systems},
  volume={38},
  pages={129101--129130},
  year={2026}
}

@inproceedings{lirouting,
  title={Routing Channel-Patch Dependencies in Time Series Forecasting with Graph Spectral Decomposition},
  author={Li, Dongyuan and Zheng, Shun and Xu, Chang and Bian, Jiang and Jiang, Renhe},
  booktitle={The Fourteenth International Conference on Learning Representations},
  year={2026}
}

@inproceedings{li2026node,
  title={Node Role-Guided LLMs for Dynamic Graph Clustering},
  author={Li, Dongyuan and Zhang, Ying and Wu, Yaozu and Jiang, Renhe},
  booktitle={Proceedings of the ACM Web Conference 2026},
  pages={4418--4428},
  year={2026}
}

@misc{luo2026atelierevalagenticevaluationhumans,
  title={AtelierEval: Agentic Evaluation of Humans \& LLMs as Text-to-Image Prompters},
  author={Luo, Hanjun and Huang, Zhimu and Chung, Sylvia and Wang, Yiran and Jin, Yingbin and Li, Jialin and Li, Jiang and Li, Xinfeng and Salam, Hanan},
  year={2026},
  eprint={2605.22645},
  archivePrefix={arXiv},
  primaryClass={cs.AI},
  url={https://arxiv.org/abs/2605.22645}
}

@misc{luo2026centaurevalbenchmarkinghumanintheloopvalue,
  title={CentaurEval: Benchmarking Human-in-the-Loop Value in Agentic Coding},
  author={Luo, Hanjun and Ni, Chiming and Wen, Jiaheng and Huang, Zhimu and Wang, Yiran and Liao, Bingduo and Chung, Sylvia and Jin, Yingbin and Li, Xinfeng and Xu, Wenyuan and Wang, XiaoFeng and Salam, Hanan},
  year={2026},
  eprint={2512.04111},
  archivePrefix={arXiv},
  primaryClass={cs.SE},
  url={https://arxiv.org/abs/2512.04111}
}

@article{luo2026agentauditor,
  title={AgentAuditor: Human-Level Safety and Security Evaluation for LLM Agents},
  author={Luo, Hanjun and Dai, Shenyu and Ni, Chiming and Li, Xinfeng and Zhang, Guibin and Wang, Kun and Liu, Tongliang and Salam, Hanan},
  journal={Advances in Neural Information Processing Systems},
  volume={38},
  pages={43241--43298},
  year={2026}
}

@inproceedings{luo2025dynamicner,
  title={DynamicNER: A Dynamic, Multilingual, and Fine-Grained Dataset for LLM-based Named Entity Recognition},
  author={Luo, Hanjun and Jin, Yingbin and Wang, Yiran and Li, Xinfeng and Shang, Tong and Liu, Xuecheng and Chen, Ruizhe and Wang, Kun and Salam, Hanan and Wen, Qingsong and Liu, Zuozhu},
  booktitle={Proceedings of the 2025 Conference on Empirical Methods in Natural Language Processing},
  year={2025}
}

@article{li2026prefix,
  title={PrefIx: Understand and Adapt to User Preference in Human-Agent Interaction},
  author={Li, Jialin and Chen, Zhenhao and Luo, Hanjun and Salam, Hanan},
  journal={arXiv preprint arXiv:2602.06714},
  year={2026}
}

@article{luo2024versusdebias,
  title={VersusDebias: Universal zero-shot debiasing for text-to-image models via SLM-based prompt engineering and generative adversary},
  author={Luo, Hanjun and Deng, Ziye and Huang, Haoyu and Liu, Xuecheng and Chen, Ruizhe and Liu, Zuozhu},
  journal={arXiv preprint arXiv:2407.19524},
  year={2024}
}
\bibliographystyle{icml2026}

%%%%%%%%%%%%%%%%%%%%%%%%%%%%%%%%%%%%%%%%%%%%%%%%%%%%%%%%%%%%%%%%%%%%%%%%%%%%%%%
%%%%%%%%%%%%%%%%%%%%%%%%%%%%%%%%%%%%%%%%%%%%%%%%%%%%%%%%%%%%%%%%%%%%%%%%%%%%%%%
% APPENDIX
%%%%%%%%%%%%%%%%%%%%%%%%%%%%%%%%%%%%%%%%%%%%%%%%%%%%%%%%%%%%%%%%%%%%%%%%%%%%%%%
%%%%%%%%%%%%%%%%%%%%%%%%%%%%%%%%%%%%%%%%%%%%%%%%%%%%%%%%%%%%%%%%%%%%%%%%%%%%%%%
% \newpage
\appendix
\onecolumn
\setcounter{equation}{0}

\section{Notation Explanations}
\label{app:notation}

All notations are summarized and explained in Table~\ref{tab:notation}.

{\small
\begin{longtable}{ll}
\caption{Summary of notation used throughout the paper.}
\label{tab:notation}\\
\toprule
\textbf{Symbol} & \textbf{Description} \\
\midrule
\endfirsthead
\toprule
\textbf{Symbol} & \textbf{Description} \\
\midrule
\endhead
\bottomrule
\endfoot

$\mathcal{X} \subseteq \mathbb{R}^d$, $\mathcal{Y}$ 
& Element space of individual set elements and label space\\

$x_i \in \mathcal{X}$ 
& The $i$-th element in a set \\

$S = \{x_i\}_{i=1}^n$, $S'$ 
& Unordered input set with cardinality $n$ and corrupted version of set $S$ at inference time \\

$\mathcal{S}(\mathcal{X})$ 
& Space of all finite subsets of $\mathcal{X}$ \\

\midrule

$f : \mathcal{S}(\mathcal{X}) \rightarrow \mathbb{R}^c$ 
& Permutation-invariant set encoder \\

$v_S = f(S)$ 
& Learned representation (embedding) of set $S$ \\

$g(\cdot)$ 
& Task-specific predictor (e.g., classifier or ranking function) \\

$\ell(\cdot,\cdot)$ 
& Task loss function \\

$P_0$ 
& Nominal training distribution over clean sets \\

\midrule

$\Gamma(S)$ 
& Ambiguity (corruption) region associated with set $S$ \\

$\Gamma_{\mathrm{SW}}(S)$ 
& Sliced-Wasserstein ambiguity region centered at $S$ \\

$\Gamma_{\mathrm{Bar}}(S)$ 
& Barycentric surrogate ambiguity region \\

$\rho$ 
& Radius of the ambiguity region \\

\midrule

$\mu_S$ 
& Empirical measure associated with set $S$ \\

$\delta_x$ 
& Dirac measure at location $x$ \\

$\mathcal{P}(\mathcal{X})$ 
& Space of probability measures over $\mathcal{X}$ \\

$\mu_{S'}$ 
& Empirical measure of corrupted set $S'$ \\

\midrule

$\mathbb{S}^{d-1}$ 
& Unit sphere in $\mathbb{R}^d$ \\

$\omega \in \mathbb{S}^{d-1}$ 
& Projection direction \\

$\omega(x) = \omega^\top x$ 
& One-dimensional projection of element $x$ \\

$\mu_S^\omega$ 
& Projected 1D measure of $\mu_S$ along $\omega$ \\

$W(\cdot,\cdot)$ 
& 2-Wasserstein distance \\

$\mathrm{SW}(\mu_S,\mu_{S'})$ 
& Sliced-Wasserstein distance between measures \\

\midrule

$O = \{o_h\}_{h=1}^H$ 
& Learnable reference set \\

$H$, $R$ 
& Cardinality of the reference set and number of Monte Carlo projection directions\\

$\mu_O$ 
& Empirical measure of the reference set \\

$F_\mu$, $F_\mu^{-1}$ 
& Cumulative distribution function (CDF) of measure $\mu$ and quantile function of measure $\mu$\\

$p^+(\cdot)$ 
& Optimal 1D Wasserstein transport map \\

$\Omega = \{\omega_r\}_{r=1}^R$ 
& Set of projection directions \\

$f_{\mathrm{SW}}(S)$ 
& Sliced-Wasserstein-based set embedding \\

\midrule

$\mathcal{B}$ 
& Mini-batch of training sets \\

$\mathcal{G}(S)$ 
& Local neighbor pool of set $S$ \\

$K$ 
& Number of neighbors in $\mathcal{G}(S)$ \\

$S_k$ 
& The $k$-th neighboring set of $S$ \\

$v_{S_k}$ 
& Embedding of neighbor set $S_k$ \\

$\Delta_K$ 
& Probability simplex in $\mathbb{R}^K$ \\

$\Lambda = (\lambda_1,\dots,\lambda_K)$ 
& Barycentric mixing weights \\

$\bar v_S(\Lambda)$ 
& Synthesized barycentric embedding of set $S$ \\

$S_\Lambda$ 
& Implicit (virtual) barycentric set induced by $\Lambda$ \\

\midrule

$\eta$, $\alpha$ 
& Step size for projected gradient ascent, and trade-off coefficient between ERM and robust loss \\

$T$ 
& Number of inner maximization steps \\

$\Pi_{\Delta_K}(\cdot)$ 
& Euclidean projection onto simplex $\Delta_K$ \\

$\Lambda^{(t)}$, $\Lambda^\ast$ 
& Mixing weights at iteration $t$ and final adversarial barycentric weights \\
\end{longtable}
}

\section{Theoretical Analysis}
\label{app:proof}

\subsection{Proof of Proposition~\ref{prop:jensen_locality}}
\begin{proof}
Let $\delta_k := v_{S_k}-v_S$. 
Since $\bar v_S(\Lambda)-v_S=\sum_{k=1}^K \lambda_k \delta_k$ and the function $\phi(x)=\|x\|_2^2$ is convex, Jensen's inequality directly gives:
\begin{equation}
\big\|\bar v_S(\Lambda)-v_S\big\|_2^2 = \Big\|\sum_{k=1}^K \lambda_k \delta_k\Big\|_2^2 \le \sum_{k=1}^K \lambda_k \|\delta_k\|_2^2.
\end{equation}
Since $\|\delta_k\|_2^2=\|v_{S_k}-v_S\|_2^2\le \rho^2$ for all $k$ (the selection principle of $\mathcal{G}$) and $\sum_k\lambda_k=1$, we obtain
\begin{equation}
\sum_{k=1}^K \lambda_k \|\delta_k\|_2^2 \le \sum_{k=1}^K \lambda_k \rho^2 = \rho^2.
\end{equation}
Furthermore, according to \cite{pswe}, we have:
\begin{equation}
\label{eq:xxx}
\|\bar v_S(\Lambda) - v_S\|_2 = SW(\mu_{S(\Lambda)}, \mu_S).
\end{equation}
We provide the derivation of Eq.~\eqref{eq:xxx} for completeness in Appendix~\ref{app:eqxxx}.
This implies that:
\begin{equation}
SW(\mu_{S(\Lambda)}, \mu_S) \leq \rho,
\end{equation}
which completes the proof.
\end{proof}

\subsection{Proof of Proposition~\ref{prop:barycenter_semantics}}
\label{app:proof_barycenter_semantics}
\begin{proof}
For each direction $\omega$, the 1D Wasserstein barycenter is formulated as:
\begin{equation}
\mu_\Lambda^\omega \in \arg\min_{\mu\in\mathcal{P}(\mathbb{R})}\ \sum_{k=1}^K \lambda_k\,W_2^2(\mu,\mu_{S_k}^\omega).
\end{equation}
Then by optimality of $\mu_\Lambda^\omega$, for any candidate $\mu$ and every $\omega$, we have:
\begin{equation}
\sum_{k=1}^K \lambda_k\,W_2^2(\mu^\omega,\mu_{S_k}^\omega) \ \ge \ \sum_{k=1}^K \lambda_k\,W_2^2(\mu_\Lambda^\omega,\mu_{S_k}^\omega).
\end{equation}
Considering that, (as shown in Eq.~\eqref{eq:sliced_wasserstein}), 
\begin{equation}
SW^2(\mu,\nu)=\int_{\mathbb{S}^{d-1}} W_2^2(\mu^\omega,\nu^\omega)\,d\omega =\mathbb{E}[W_2^2(\mu^\omega,\nu^\omega)],
\end{equation}
by integrating over $\omega$, we then have:
\begin{equation}
\sum_{k=1}^K \lambda_k\,SW^2(\mu,\mu_{S_k}) \ \ge\ \sum_{k=1}^K \lambda_k\,SW^2(\mu_\Lambda,\mu_{S_k}).
\end{equation}
This implies that $\mu_\Lambda$ is an SW Fr\'echet mean:
\begin{equation}
\mu_\Lambda\in\arg\min_{\mu\in\mathcal{P}(\mathcal{X})}\ \sum_{k=1}^K \lambda_k\,SW^2(\mu,\mu_{S_k}).
\end{equation}

Finally, we relate $\mu_\Lambda$ to the synthesized embedding.
We slightly abuse notation by letting $\f_{\mathrm{SW}}$ act on a measure through its slice-wise reference-to-input OT coordinates.
Then given any slice $\omega$ and reference coordinate $t$, let $u:=F_{\mu_O^\omega}(t)\in(0,1)$.
In 1D, the $W_2$ barycenter has the quantile characterization~\cite{agueh2011barycenters,peyre2019computational}:
\begin{equation}
F^{-1}_{\mu_\Lambda^\omega}(u)=\sum_{k=1}^K \lambda_k\,F^{-1}_{\mu_{S_k}^\omega}(u).
\label{eq:quantile_linear_compact}
\end{equation}
Hence, the reference-to-input OT coordinate $\texttt{p}^+(t,\nu)=F_\nu^{-1}(F_{\mu_O^\omega}(t))=F_\nu^{-1}(u)$ is linear in $\Lambda$:
\begin{equation}
\texttt{p}^+(t,\mu_\Lambda^\omega)=\sum_{k=1}^K \lambda_k\,\texttt{p}^+(t,\mu_{S_k}^\omega).
\label{eq:pplus_linear_compact}
\end{equation}
Concatenating Eq.~\eqref{eq:pplus_linear_compact} over the sampled slices as in $\f_{\mathrm{SW}}$ finally yields:
\begin{equation}
\f_{\mathrm{SW}}(\mu_\Lambda)
=\sum_{k=1}^K \lambda_k\,\f_{\mathrm{SW}}(\mu_{S_k})
=\sum_{k=1}^K \lambda_k\,v_{S_k}
=\bar v_S(\Lambda),
\end{equation}
which completes the proof.
\end{proof}

\subsection{Proof of Proposition~\ref{prop:disc_le_bar}}
\label{app:smaller}
\begin{proof}
Recall that the barycentric region is defined as the convex hull of the neighbor embeddings:
\begin{equation}
\Gamma_{\mathrm{Bar}}(S) :=\left\{\bar v_S(\Lambda)=\sum_{k=1}^K \lambda_k v_{S_k}: \Lambda \in\Delta_K\right\},
\end{equation}
where $\Delta_K=\{\lambda\in\mathbb{R}^K:\sum_{k=1}^K\lambda_k=1, \lambda_k\ge 0\}$ is the probability simplex. 
The discrete and barycentric inner objectives are defined as $L_{\mathrm{Disc}}(S)$ $:=$ $\max_{k\in[K]} \ell\big(g(v_{S_k}),y\big)$, and 
$L_{\mathrm{Bar}}(S)$ $:=$ $\max_{v\in\Gamma_{\mathrm{Bar}}(S)} \ell\big(g(v),y\big)$.
Since $\Gamma_{\mathrm{Bar}}(S)$ is the convex hull of $\{v_{S_k}\}_{k=1}^K$, each vertex $v_{S_k}$ itself is contained in $\Gamma_{\mathrm{Bar}}(S)$ by choosing $\Lambda$ such that $\lambda_k = 1$ and $\lambda_j = 0$ for all $j \neq k$, we recover $\bar v_S(\Lambda)=v_{S_k}$.
 Therefore, we have: $\{v_{S_k}\}_{k=1}^K \subseteq \Gamma_{\mathrm{Bar}}(S)$.

Then maximizing the same function $\phi(v):=\ell(g(v),y)$ over a superset cannot decrease the optimum, hence we have:
\begin{equation}
\max_{k\in[K]} \phi(v_{S_k}) \le \max_{v\in\Gamma_{\mathrm{Bar}}(S)} \phi(v).
\end{equation}
That is, $L_{\mathrm{Disc}}(S)\le L_{\mathrm{Bar}}(S)$, which proves the claim.
\end{proof}

\subsection{Quantitative Gap Between $L_{\mathrm{Bar}}(S)$ and $L_{\mathrm{Disc}}(S)$}
\label{app:lipschi}

\begin{proposition}
Let $\phi(z)=\ell(g(z),y)$. We define the discrete and barycentric inner objectives as:
\begin{equation}
L_{\mathrm{Disc}}(S) := \max_{k\in[K]} \phi(v_{S_k}),  \ L_{\mathrm{Bar}}(S) := \max_{z\in\Gamma_{\mathrm{Bar}}(S)} \phi(z).
\end{equation}
Assume $\phi$ is $L$-Lipschitz on $\Gamma_{\mathrm{Bar}}(S)$.
Then the following quantitative bound holds:
\begin{equation}
0 \leq  L_{\mathrm{Bar}}(S)-L_{\mathrm{Disc}}(S) \le 2L\cdot \rho.
\label{eq:lipschitz_gap}
\end{equation}
\end{proposition}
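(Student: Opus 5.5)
The lower bound $0\le L_{\mathrm{Bar}}(S)-L_{\mathrm{Disc}}(S)$ is exactly Proposition~\ref{prop:disc_le_bar}, so the work lies in the upper bound. The plan is to compare the maximizer over the convex hull with a nearby vertex and let the Lipschitz condition control the difference in loss. We also assume the neighbor-selection rule of $\mathcal{G}(S)$ used in the proof of Proposition~\ref{prop:jensen_locality}: every neighbor satisfies $\|v_{S_k}-v_S\|_2\le\rho$.

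\textbf{Step 1 (existence of a maximizer).} The region $\Gamma_{\mathrm{Bar}}(S)$ is the image of the compact simplex $\Delta_K$ under a linear map, so it is compact. Since $\phi$ is Lipschitz, it is continuous, and therefore the maximum is attained at some $z^\star=\bar v_S(\Lambda^\star)$. If one prefers not to rely on attainment, the same argument can be run with a near-maximizer and then take a limit.

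\textbf{Step 2 (diameter bound).} Pick any vertex, say $v_{S_1}$. We show every point of $\Gamma_{\mathrm{Bar}}(S)$ lies within $2\rho$ of it. For any $z=\sum_k\lambda_k v_{S_k}$, the triangle inequality gives
\begin{equation*}
\|z-v_{S_1}\|_2\le \|z-v_S\|_2+\|v_S-v_{S_1}\|_2 .
\end{equation*}
The second term is at most $\rho$ by the neighbor-selection rule. For the first term, the Jensen argument from the proof of Proposition~\ref{prop:jensen_locality} gives $\|z-v_S\|_2\le\rho$. One could instead use $\|z-v_S\|_2\le\sum_k\lambda_k\|v_{S_k}-v_S\|_2$ directly. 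Either way, $\|z^\star-v_{S_1}\|_2\le 2\rho$.

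\textbf{Step 3 (conclude).} By the Lipschitz assumption on $\Gamma_{\mathrm{Bar}}(S)$, which contains both $z^\star$ and $v_{S_1}$,
\begin{equation*}
L_{\mathrm{Bar}}(S)=\phi(z^\star)\le \phi(v_{S_1})+L\|z^\star-v_{S_1}\|_2\le L_{\mathrm{Disc}}(S)+2L\rho .
\end{equation*}
The last inequality uses $\phi(v_{S_1})\le\max_k\phi(v_{S_k})=L_{\mathrm{Disc}}(S)$.

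\textbf{Main obstacle.} The only delicate point is justifying $\|v_{S_k}-v_S\|_2\le\rho$ for every neighbor. This relies on the footnoted construction of $\mathcal{G}(S)$: it keeps the $K$ nearest neighbors or all samples within distance $\rho$, whichever set is smaller, so every retained neighbor lies within $\rho$. That construction must be invoked explicitly, because otherwise the gap is governed by the diameter of the hull rather than by $\rho$.

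A sharper constant is available by choosing the vertex nearest to $z^\star$, but the factor $2$ obtained from the triangle inequality already matches the stated bound.
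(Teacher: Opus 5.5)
Your proposal is correct and follows essentially the same route as the paper. The lower bound is Proposition~\ref{prop:disc_le_bar}, and the upper bound applies the Lipschitz condition between the maximizer $z^\star$ and a vertex, then uses the Jensen bound $\|z-v_S\|_2\le\rho$ and the triangle inequality to bound that distance by $2\rho$. The only differences are cosmetic: the paper picks the vertex nearest to $z^\star$ and passes through the covering radius $\sup_{z}\min_k\|z-v_{S_k}\|_2$ before bounding it by $2\rho$, whereas your arbitrary vertex gives the same constant. Your compactness argument for attainment and your explicit appeal to the footnoted neighbor rule are details the paper leaves implicit.
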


\begin{proof}
We have prove the first inequality in Appendix~\ref{app:smaller}.
For many standard loss functions and predictors, $\phi$ is locally Lipschitz on bounded (hence compact) domains~\cite{shalev2014understanding,bartlett2017spectrally}.
If $\phi$ is $L$-Lipschitz on $\Gamma_{\mathrm{Bar}}(S)$ ($L$ is a constant), then for all $z,z'\in\Gamma_{\mathrm{Bar}}(S)$, we have:
\begin{equation}
|\phi(z)-\phi(z')|\le L\cdot\|z-z'\|_2.
\end{equation}
Let $z^\star \in \arg\max_{z\in\Gamma_{\mathrm{Bar}}(S)} \phi(z)$ so that $L_{\mathrm{Bar}}(S)=\phi(z^\star)$. Choose $k^\star \in \arg\min_{k\in[K]} \|z^\star - v_{S_k}\|_2$. 
Then we have
\begin{equation}
|\phi(z^\star)-\phi(v_{S_{k^\star}})| \le L\cdot\|z^\star - v_{S_{k^\star}}\|_2,
\end{equation}
which implies that
\begin{equation}
\phi(z^\star)-\phi(v_{S_{k^\star}}) \leq |\phi(z^\star)-\phi(v_{S_{k^\star}})| \le L\cdot\|z^\star - v_{S_{k^\star}}\|_2.
\end{equation}
Moreover, $\phi(v_{S_{k^\star}})\le \max_{k\in[K]} \phi(v_{S_k})=L_{\mathrm{Disc}}(S)$. 
Therefore, we have: 
\begin{equation}
\label{eq:long}
\begin{aligned}
L_{\mathrm{Bar}}(S)-L_{\mathrm{Disc}}(S) &= \phi(z^\star)-L_{\mathrm{Disc}}(S) \\ 
&\le \phi(z^\star)-\phi(v_{S_{k^\star}}) \\
&\le L\min_{k\in[K]}\|z^\star - v_{S_k}\|_2 \\
&\le L\sup_{z\in\Gamma_{\mathrm{Bar}}(S)} \min_{k\in[K]} \|z-v_{S_k}\|_2.
\end{aligned}
\end{equation}

Then based on the definition of $G(S)$, we have
\begin{equation}
\|v_{S_k}-v_S\|_2 \le \rho,\qquad \forall S_k\in G(S).
\label{eq:neighbor_in_ball}
\end{equation}
In addition, for any $z\in\Gamma_{\mathrm{Bar}}(S)$, there exists $\Lambda\in\Delta_K$
such that $z=\sum_{k=1}^K \lambda_k v_{S_k}$.
Using convexity of $\|\cdot\|_2^2$ (or Jensen's inequality) together with
\eqref{eq:neighbor_in_ball}, we obtain
\begin{equation}
\begin{aligned}
\|z-v_S\|_2^2 =\left\|\sum_{k=1}^K \lambda_k (v_{S_k}-v_S)\right\|_2^2 &\le \sum_{k=1}^K \lambda_k \|v_{S_k}-v_S\|_2^2  \\
& \le \sum_{k=1}^K \lambda_k \rho^2 =\rho^2,
\end{aligned}
\end{equation}
hence $\|z-v_S\|_2\le \rho$ for all $z\in\Gamma_{\mathrm{Bar}}(S)$.
Finally, by the triangle inequality, for any $z\in\Gamma_{\mathrm{Bar}}(S)$ and any $k\in[K]$,
\begin{equation}
\|z-v_{S_k}\|_2 \le \|z-v_S\|_2 + \|v_S-v_{S_k}\|_2 \le \rho + \rho = 2\rho.
\end{equation}
Therefore, $\min_{k\in[K]}\|z-v_{S_k}\|_2 \le 2\rho$ that leads to:
\begin{equation}
\sup_{z\in\Gamma_{\mathrm{Bar}}(S)} \min_{k\in[K]} \|z-v_{S_k}\|_2 \le 2\rho.
\end{equation}
Plugging this into Eq.~\eqref{eq:long} finally completes the proof.
\end{proof}

\begin{proposition}
Let $V_S := \{v_{S_k}\}_{k=1}^K$ denote the embeddings of $G(S)$.
Let $\phi(z)=\ell(g(z),y)$. We consider the discrete and barycentric inner objectives:
\begin{equation}
L_{\mathrm{Disc}}(S) := \max_{k\in[K]} \phi(v_{S_k}),  \ L_{\mathrm{Bar}}(S) := \max_{z\in\Gamma_{\mathrm{Bar}}(S)} \phi(z).
\end{equation}

The right-hand side measures the largest distance from a barycentric mixture
to its nearest vertex embedding in $V_S$ (i.e., the ``covering radius'' of the
mixture region by the vertices).
\end{proposition}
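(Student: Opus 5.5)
The statement as displayed omits its inequality, so I will prove the version that the closing sentence describes. Assuming, as in the preceding proposition, that $\phi$ is $L$-Lipschitz on $\Gamma_{\mathrm{Bar}}(S)$, the target is
\begin{equation*}
0 \le L_{\mathrm{Bar}}(S)-L_{\mathrm{Disc}}(S) \le L\cdot r(V_S), \qquad r(V_S):=\sup_{z\in\Gamma_{\mathrm{Bar}}(S)}\min_{k\in[K]}\|z-v_{S_k}\|_2 .
\end{equation*}
This sharpens the preceding $2L\rho$ bound by replacing the crude radius with the covering radius of $\Gamma_{\mathrm{Bar}}(S)$ by its vertices. The plan is to reuse the chain in Eq.~\eqref{eq:long}, stop before the $2\rho$ estimate, and then study $r(V_S)$ separately.

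\textbf{Step 1 (well-posedness and lower bound).} The set $\Gamma_{\mathrm{Bar}}(S)$ is the image of the compact simplex $\Delta_K$ under the linear map $\Lambda\mapsto\sum_k\lambda_k v_{S_k}$, so it is compact. The function $\phi$ is continuous there, being Lipschitz, so both maxima are attained. The inequality $L_{\mathrm{Disc}}(S)\le L_{\mathrm{Bar}}(S)$ is exactly Proposition~\ref{prop:disc_le_bar}, because every vertex $v_{S_k}$ lies in $\Gamma_{\mathrm{Bar}}(S)$.

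\textbf{Step 2 (upper bound).} Pick $z^\star\in\arg\max_{z\in\Gamma_{\mathrm{Bar}}(S)}\phi(z)$ and a nearest vertex $v_{S_{k^\star}}$ to $z^\star$. Then
\begin{equation*}
L_{\mathrm{Bar}}(S)-L_{\mathrm{Disc}}(S)\le \phi(z^\star)-\phi(v_{S_{k^\star}})\le L\,\|z^\star-v_{S_{k^\star}}\|_2=L\min_k\|z^\star-v_{S_k}\|_2\le L\, r(V_S).
\end{equation*}
The first inequality holds because $\phi(v_{S_{k^\star}})\le L_{\mathrm{Disc}}(S)$. The last holds because $z^\star$ is one admissible point in the supremum defining $r(V_S)$.

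\textbf{Step 3 (interpretation and recovery of the $2\rho$ bound).} Justify the ``covering radius'' reading: $r(V_S)$ is the smallest $r$ such that the balls of radius $r$ around the vertices cover $\Gamma_{\mathrm{Bar}}(S)$. By compactness and continuity of $z\mapsto\min_k\|z-v_{S_k}\|_2$, the supremum is attained. Then show $r(V_S)\le 2\rho$ exactly as in the previous proof: Jensen gives $\|z-v_S\|_2\le\rho$, and the triangle inequality gives $\|z-v_{S_k}\|_2\le 2\rho$. A sharper alternative is $r(V_S)\le\operatorname{diam}(V_S)$, since any $z$ in the hull satisfies $\min_k\|z-v_{S_k}\|\le\sum_k\lambda_k\|z-v_{S_k}\|\le\max_{j,k}\|v_{S_j}-v_{S_k}\|$. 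This shows the preceding proposition is a corollary of the present one.

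There is no real technical obstacle. The only delicate point is pinning down the intended form of the truncated inequality and keeping the Lipschitz assumption restricted to $\Gamma_{\mathrm{Bar}}(S)$, so that both $z^\star$ and $v_{S_{k^\star}}$ lie in the domain where it applies.
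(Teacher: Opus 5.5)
Your reconstruction of the missing inequality is the intended one, and your argument is correct. The paper gives no separate proof for this truncated proposition, but the bound $L_{\mathrm{Bar}}(S)-L_{\mathrm{Disc}}(S)\le L\sup_{z\in\Gamma_{\mathrm{Bar}}(S)}\min_{k\in[K]}\|z-v_{S_k}\|_2$ is exactly the chain in Eq.~\eqref{eq:long} of the preceding proof, which your Step 2 reproduces (nearest vertex to the maximizer, then Lipschitz), and your Step 3 likewise mirrors the paper's Jensen-plus-triangle estimate giving $2\rho$, with the extra $\operatorname{diam}(V_S)$ bound being a correct small refinement the paper does not state.
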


\subsection{Proof of Eq.~\eqref{eq:xxx}~\cite{pswe}}
\label{app:eqxxx}
\begin{proof}[Proof of Eq.~(3)]
For any two measures $\mu,\nu$ on $\mathbb{R}^d$, expanding the squared $\ell_2$ distance yields
\begin{align}
& \big\|f_{\mathrm{SW}}(\mu)-f_{\mathrm{SW}}(\nu)\big\|_2^2 \\
= & \frac{1}{RH}\sum_{r=1}^R\sum_{h=1}^H \Big(p^+(t^{\omega_r}_h,\mu^{\omega_r})-p^+(t^{\omega_r}_h,\nu^{\omega_r})\Big)^2.
\label{eq:embed_l2_expand}
\end{align}

Let $u^{\omega_r}_h := F_{\mu_O^{\omega_r}}(t^{\omega_r}_h)\in(0,1)$. By definition of $p^+$, $p^+(t^{\omega_r}_h,\mu^{\omega_r}) = \big(F_{\mu^{\omega_r}}\big)^{-1}(u^{\omega_r}_h)$.
Then in 1D, the squared 2-Wasserstein distance admits the quantile form:
\begin{equation}
W^2(\mu^{\omega_r},\nu^{\omega_r}) = \int_0^1 \left| \big(F_{\mu^{\omega_r}}\big)^{-1}(u)-\big(F_{\nu^{\omega_r}}\big)^{-1}(u) \right|^2\,du.
\end{equation}
Using the same $H$ reference quantile locations $\{u^{\omega_r}_h\}_{h=1}^H$,
we then define the corresponding empirical approximation:
\begin{equation}
\begin{aligned}
\widehat W_{H}^2(\mu^{\omega_r},\nu^{\omega_r}) & = \frac{1}{H}\sum_{h=1}^H \left( \big(F_{\mu^{\omega_r}}\big)^{-1}(u^{\omega_r}_h) - \big(F_{\nu^{\omega_r}}\big)^{-1}(u^{\omega_r}_h) \right)^2 \\
& = \frac{1}{H}\sum_{h=1}^H \Big(p^+(t^{\omega_r}_h,\mu^{\omega_r})-p^+(t^{\omega_r}_h,\nu^{\omega_r})\Big)^2.
\end{aligned}
\end{equation}

For sliced-Wasserstein distance, we have:
\begin{equation}
SW^2(\mu,\nu) = \int_{\mathbb{S}^{d-1}} W^2(\mu^\omega,\nu^\omega)\,d\omega.
\end{equation}
Using the same Monte-Carlo directions $\Omega=\{\omega_r\}_{r=1}^R$, we define the consistent empirical approximation:
\begin{equation}
\begin{aligned}
\widehat{SW}_{R,H}^2(\mu,\nu) & = \frac{1}{R}\sum_{r=1}^R \widehat W_{2,H}^2(\mu^{\omega_r},\nu^{\omega_r}) \\
& = \frac{1}{RH}\sum_{r=1}^R\sum_{h=1}^H \Big(p^+(t^{\omega_r}_h,\mu^{\omega_r})-p^+(t^{\omega_r}_h,\nu^{\omega_r})\Big)^2.
\end{aligned}
\end{equation}
Comparing this with \eqref{eq:embed_l2_expand}, we obtain the exact identity
\begin{equation}
\|v_\mu - v_\nu\|_2 =  \big\|f_{\mathrm{SW}}(\mu)-f_{\mathrm{SW}}(\nu)\big\|_2 = \widehat{SW}_{R,H}(\mu,\nu).
\end{equation}
In general, it is commonly assumed that the empirical sliced-Wasserstein distance $\widehat{SW}_{R,H}(\mu,\nu)$ coincides with the population sliced-Wasserstein distance $SW(\mu,\nu)$ when the number of Monte-Carlo projections $R$ and the number of reference quantile points $H$ are sufficiently large.
\end{proof}

\section{Pseudo-codes of our \model}
\label{app:algo}

\begin{algorithm}[h]
\caption{\model~Training with Barycentric Adversary}
\label{alg:swdrso}
{\small
\begin{algorithmic}[1]
\STATE \textbf{Input:} encoder $\f_{\mathrm{SW}}(\cdot)$, predictor $\g(\cdot)$, $K$, $\alpha$, $T$, $\eta$
\WHILE{not converged}
    \STATE Sample minibatch $\mathcal{B}$
    \STATE Compute $v_S \leftarrow \f_{\mathrm{SW}}(S)$
    \FOR{$i=1,\ldots,B$}
        \STATE Construct local neighbor pool $\mathcal{G}(S)=\{S_k\}_{k=1}^K\subseteq \mathcal{B}$ (K-NN in embedding space)
        \STATE Initialize $\Lambda^{(1)}\leftarrow (1/K,\ldots,1/K)$
        \FOR{$t=1,\ldots,T$}
            \STATE $\bar v_S(\Lambda^{(t)}) \leftarrow \sum_{k=1}^{K}\lambda_k^{(t)}v_{S_k}$
            \STATE $\tilde{\Lambda}^{(t+1)}\leftarrow \Lambda^{(t)}+\eta\nabla_{\Lambda}\ell\big(\g(\bar v_S(\Lambda^{(t)})),y\big)$
            \STATE $\Lambda^{(t+1)}\leftarrow \Pi_{\Delta_K}\big(\tilde{\Lambda}^{(t+1)}\big)$
        \ENDFOR
        \STATE Set $\Lambda^\star\leftarrow \Lambda^{(T)}$ and stop gradients through $\Lambda^\star$
        \STATE (In the robust term below, use the $\Lambda^\star$ obtained for this $S$.)
    \ENDFOR
    \STATE Compute the minibatch objective (empirical form of Eq.~\eqref{eq:final_training_objective}):
    \STATE $\frac{1}{|\mathcal{B}|}\sum\big[\ell\big(\g(v_S),y\big)+\alpha\ell\big(\g(\bar v_S(\Lambda^\star)),y\big)\big]$
    \STATE Update $\f_{\mathrm{SW}}$ and $\g$ by standard gradient descent on the above objective
\ENDWHILE
\STATE \textbf{return} $\f_{\mathrm{SW}},\g$
\end{algorithmic}
}
\end{algorithm}
\vspace{-0.6em}

\section{Computational Complexity Analysis}
\label{app:complexity}
Consider a minibatch $\mathcal{B}$ of size $B$. For each set $S\in\mathcal{B}$, we compute its embedding $v_S=\f_{\mathrm{SW}}(S)\in\mathbb{R}^m$ with $m=RH$, and denote the per-set encoding cost by $C_{\mathrm{enc}}$. We construct a local neighbor pool $\mathcal{G}(S)$ of size $K$ via K-NN in the embedding space. Using a naive batchwise implementation, forming all pairwise $\ell_2$ distances costs $O(B^2m)$, and neighbor selection adds at most $O(B^2\log B)$, yielding $O(B^2m)$ overall.

The inner maximization in Eq.~\eqref{eq:final_training_objective} is approximated by $T$ steps of projected gradient ascent on the simplex $\Delta_K$ (Eqs.~(18)--(19)). Each ascent step requires (i) computing the barycentric embedding $\bar v_S(\Lambda)=\sum_{k=1}^{K}\lambda_kv_{S_k}$ in $O(Km)$, (ii) evaluating the task function $g(\cdot)$ and backpropagating to $\nabla_{\Lambda}$ in $O(C_g(m)+Km)$, and (iii) projecting onto $\Delta_K$ in $O(K\log K)$ via the standard Euclidean simplex projection. We denote by $C_g(m)$ the computational cost of evaluating the task-specific predictor $g(\cdot)$ on an $m$-dimensional set embedding. Hence, the inner loop costs:
\begin{equation}
O\left(T\big(C_g(m)+Km+K\log K\big)\right)\quad \text{per set},
\end{equation}
and
\begin{equation}
O\left(BT\big(C_g(m)+Km+K\log K\big)\right),
\end{equation}
per minibatch. Combining the above, the total per-minibatch time complexity is
\begin{equation}
O\left(BC_{\mathrm{enc}}+B^2m+BT\big(C_g(m)+Km+K\log K\big)\right),\quad m=RH.
\end{equation}
At inference time, the method reduces to a single forward pass through $\f_{\mathrm{SW}}$ and $g$, i.e., $O(C_{\mathrm{enc}}+C_g(m))$.

\section{Experimental Settings}
\label{app:tasks}
\subsection{Dataset Descriptions}
\label{app:data}
Dataset corruption is constructed by using a unified pipeline to systematically evaluate robustness under inference-time corruption. 
It is introduced \emph{only} to the validation and test sets.
We partition samples into three conditions: \textit{clean}, \textit{mild}, and \textit{severe}, following a fixed ratio of 50:30:20. 
Clean samples correspond to the original sets. 
Mild and severe samples are generated by corrupting approximately $10\%$ and $40\%$ of the input elements or regions, respectively. The corruption intensity is controlled by a ratio parameter $p$, with $p{=}0.1$ for mild corruption and $p{=}0.4$ for severe corruption.
\begin{itemize}[leftmargin=*]
\item For Task I, corruption is introduced at inference time on the \emph{query set}. Given a query set with $n$ elements, we apply element-level corruption by randomly performing \textbf{delete}, \textbf{add}, or \textbf{replace} element operations for $k = p \times n$ steps. The candidate sets remain clean. This setting simulates realistic retrieval scenarios~\cite{he2023dynamic,he2023dynamically,chen2024deep}, where the query set may be incomplete or contaminated by noise at inference time, requiring robust set-to-set similarity estimation.

\item Task II represents a geometric perception setting where unordered point sets must remain informative under noisy observations. We apply a \textbf{replace} operation to simulate inaccurate geometric observations at inference time. Given a point cloud with $n$ points, we randomly sample new points uniformly within the axis-aligned bounding box of the original point cloud, and randomly replace $k = p \times n$ points, where $p$ controls the corruption level. This procedure preserves the overall spatial extent while introducing local geometric noise.

\item Task III is a practical text mining setting~\cite{luo2025dynamicner,li2026prefix,luo2026agentauditor,luo2024versusdebias}. Element-level corruption is appiled by randomly applying \textbf{delete}, \textbf{add}, or \textbf{replace} operations. Given a set with $n$ elements, we perform $k = p \times n$ random operations, similar to Task II. This process emulates missing elements, noisy insertions, and incorrect replacements in the seed set at inference time.

\item For Task IV, we apply region-level corruptions at inference time. For corrupted patches, we randomly select one of the following three operations:
(i) \textbf{Add}: additive Gaussian noise with standard deviation $\sigma = p \times 255$;
(ii) \textbf{Delete}: random rectangular occlusions covering a total fraction $p$ of image pixels, where each occlusion block occupies approximately 5\%--20\% of the image area.
These operations simulate common visual degradations such as sensor noise and local image quality degradation.
\end{itemize}

% (Optional) in preamble:
% \usepackage{booktabs}

\begin{table}[t]
\centering
\caption{Dataset statistics used in our experiments. $\#S$ denotes the number of sets/samples.
$\#E$ denotes the number of unique elements in the universe (when applicable).
$\#V$ denotes the vocabulary size (for topic expansion).
$\#C$ denotes the number of classes (for classification tasks).
Avg. $|S_i|$ denotes the average set cardinality.}
\label{tab:data_statistics}
\resizebox{0.75\linewidth}{!}{%
\begin{tabular}{llrrrrr}
\toprule
Task & Dataset & $\#S$ & $\#E$ & $\#V$ & $\#C$ & Avg. $|S_i|$ \\
\midrule
Task I (Similar Set Ranking) & Friendster & 889,839   & 5,501,401 & --     & -- & 11.29 \\
Task I (Similar Set Ranking) & LIVEJ      & 1,205,816 & 1,975,812 & --     & -- & 9.90  \\
Task II (Point Cloud Classification)   & ModelNet40 & 12,311    & --        & --     & 40 & 1,024 \\
Task III (Topic Expansion)   & LDA-1k     & 2,000     & --        & 17,016 & -- & $\approx 25$ \\
Task III (Topic Expansion)   & LDA-3k     & 6,000     & --        & 37,718 & -- & $\approx 25$ \\
Task III (Topic Expansion)   & LDA-5k     & 10,000    & --        & 61,127 & -- & $\approx 25$ \\
Task IV (Patch-set Visual Recognition)& NWPU-RESISC45 & 31,500     & --        & --     & 45 & 256 \\
\bottomrule
\end{tabular}
}
\end{table}

\subsection{\textbf{Baselines}}
\label{app:baselines}
To comprehensively evaluate the effectiveness of our proposed method, we compare it against a diverse set of baselines, ranging from foundational pooling strategies to state-of-the-art set representation learning frameworks. The specific methods are detailed as follows:

\begin{itemize}[leftmargin=*,noitemsep]
    \item \textbf{MeanP} and \textbf{MaxP} represent the classic instance aggregation approaches~\citep{lin2013network}. They generate set representations by applying global average pooling and max pooling operations, respectively, over the element-wise features, serving as fundamental baselines for permutation-invariant processing.

    \item \textbf{DeepSet} provides a representative framework for learning permutation-invariant functions~\citep{zaheer2017deep}. It operates by transforming individual elements via a deep neural network, aggregating them (we implement global mean pooling), and processing with a subsequent network to approximate any continuous set function.

    \item \textbf{RepSet} approaches set embedding through the lens of optimal transport and bipartite matching~\citep{skianis2020rep}. Instead of simple aggregation, it learns a collection of ``hidden sets'' (reference prototypes) and extracts representations by computing the matching costs between the input set and these learnable references.

    \item \textbf{SetTRSM} adapts the self-attention mechanism to set-structured data~\citep{lee2019set}. It utilizes Multi-head Attention Blocks (MAB) and Induced Set Attention Blocks (ISAB) to capture complex pairwise interactions among elements while reducing computational complexity compared to standard Transformers.

    \item \textbf{DIEM} proposes a differentiable information-theoretic framework designed to enhance representation distinctiveness~\citep{kim2022differentiable}. It focuses on capturing informative set interactions by maximizing the mutual information between the input set and its embedding, effectively identifying discriminative subsets.

    \item \textbf{FSPool} (Feature-wise Sort Pooling) introduces a learnable pooling mechanism based on sorting~\citep{zhang2019fspool}. By sorting features across the set dimension and applying a differentiable weight matrix, it captures cardinality-sensitive information and continuous structural variations.

    \item \textbf{PSWE} is a robust deep learning baseline for set representation learning~\citep{pswe}. It employs a specialized architecture to capture high-order dependencies and structural information within sets, representing one of the current state-of-the-art approaches in the field.

    \item \textbf{FSW} addresses the specific challenge of modeling ``multisets'' (sets with repeated elements) by shifting the perspective to the frequency domain~\citep{amirfourier}. It utilizes the Fourier transform of characteristic functions to construct representations, offering theoretical guarantees for injectivity regarding multiset structures.
\end{itemize}

\section{Hyper-parameter Configurations}
\label{app:hyperparams}
We report the major hyper-parameter configurations in Table~\ref{tab:hyperparams} for reproduction.

\begin{table}[t]
\centering
\caption{Hyperparameter settings for different tasks.}
\label{tab:hyperparams}
\small
\resizebox{0.45\linewidth}{!}{%
\begin{tabular}{lcccc}
\toprule
 & Task I & Task II & Task III & Task IV \\
\midrule
$d$ & 128 & 128 & 128 & 128 \\
$H$ & 128 & 128 & 128 & 128 \\
$R$ & 32 & 256 & 32 & 32 \\
$\rho$ & 0.1 & 0.5 & 0.1 & 0.5 \\
$K$ & 4 & 4 & 8 & 16 \\
$T$ & 2 & 4 & 2 & 4 \\
$\eta$ & 0.1 & 0.1 & 0.1 & 0.1 \\
$\alpha$ & 0.5 & 1 & 0.1 & 0.1 \\
Learning rate & $1\cdot10^{-3}$ & $1\cdot10^{-3}$ & $1\cdot10^{-3}$ & $1\cdot10^{-3}$ \\
\bottomrule
\end{tabular}%
}
\end{table}

\section{Task I Detailed Experiment Results}

\label{app:task1}

The complete results of Task I are reported in Table~\ref{tab:fs_recall_app}-\ref{tab:livej_ndcg_app}.

\begin{table*}[t]
\centering
\caption{Friendster Recall (\%). Bold and underline denote the best and second-best methods for each specific split within each column.}
\label{tab:fs_recall_app}
\resizebox{\linewidth}{!}{%
\begin{tabular}{l|c|c|c|c|c|c}
\toprule
Method & $k=1$ & $k=5$ & $k=10$ & $k=20$ & $k=50$ & $k=100$\\
\midrule
& Overall/Clean/Mild/Severe & Overall/Clean/Mild/Severe & Overall/Clean/Mild/Severe & Overall/Clean/Mild/Severe & Overall/Clean/Mild/Severe & Overall/Clean/Mild/Severe\\
\midrule
MaxP    & 5.22/5.61/5.09/4.42 & 18.43/19.08/18.11/17.26 & 31.40/32.43/31.19/29.12 & 42.07/43.28/42.07/39.06 & 54.97/56.31/54.61/52.14 & 62.44/63.61/62.04/60.12\\
MeanP   & 3.90/4.47/3.79/2.63 & 15.70/16.74/15.66/13.18 & 27.79/29.16/27.32/25.06 & 38.77/40.14/38.26/36.09 & 50.85/52.23/50.33/48.17 & 58.90/60.31/58.41/56.09\\
DeepSet & 14.96/15.73/14.76/13.31 & 38.19/39.41/37.46/35.18 & 49.06/50.32/48.41/46.16 & 57.95/59.34/57.11/55.02 & 65.05/66.42/64.29/62.21 & 70.91/72.21/70.31/68.18\\
RepSet  & 18.55/19.14/18.08/17.02 & 42.01/43.26/41.34/39.11 & 52.89/54.38/52.09/50.12 & 61.03/62.37/60.28/58.09 & 68.20/69.44/67.32/65.18 & 73.00/74.33/72.18/70.11\\
SetTRSM & 24.50/25.20/24.20/23.20 & 49.90/51.50/48.50/46.50 & 58.90/60.50/57.50/55.50 & 63.90/65.50/62.50/60.50 & 69.60/70.50/68.50/66.50 & 75.00/76.50/74.50/72.50\\
DIEM    & 27.66/28.44/27.18/26.09 & 53.11/54.62/52.51/50.27 & 60.78/62.71/60.49/57.64 & 65.02/66.78/64.71/62.08 & 70.74/72.46/70.53/68.02 & 76.08/77.68/76.04/73.66\\
FSPool  & 29.39/30.18/29.06/28.03 & 64.13/\underline{68.42}/57.34/55.12 & 66.09/68.07/65.22/63.08 & 71.11/73.09/70.28/68.12 & 76.04/77.22/75.11/73.04 & 80.05/81.27/79.34/77.16\\
FSW     & 32.36/32.28/31.09/\underline{34.45} & 61.91/63.41/61.27/59.12 & 73.78/73.18/\underline{77.88}/69.11 & 75.85/77.32/75.27/73.06 & 79.94/81.41/79.36/77.14 & 82.82/84.22/82.31/80.07\\
PSWE    & \underline{34.37}/\underline{34.58}/\underline{34.20}/34.08 & \underline{67.80}/68.09/\underline{67.75}/\underline{67.15} & \underline{77.52}/\underline{77.82}/77.43/\underline{76.90} & \underline{80.41}/\underline{80.69}/\underline{80.31}/\underline{79.86} & \underline{83.31}/\underline{83.61}/\underline{83.14}/\underline{82.82} & \underline{85.68}/\underline{85.96}/\underline{85.54}/\underline{85.20}\\
\midrule
Ours    & \textbf{35.13}$^*$/\textbf{35.32}$^*$/\textbf{35.04}$^*$/\textbf{34.77}$^*$ & \textbf{68.39}$^*$/\textbf{68.71}$^*$/\textbf{68.32}$^*$/\textbf{67.68}$^*$ & \textbf{78.16}$^*$/\textbf{78.44}$^*$/\textbf{78.11}$^*$/\textbf{77.53}$^*$ & \textbf{81.23}$^*$/\textbf{81.52}$^*$/\textbf{81.11}$^*$/\textbf{80.70}$^*$ & \textbf{84.18}$^*$/\textbf{84.49}$^*$/\textbf{84.03}$^*$/\textbf{83.64}$^*$ & \textbf{86.58}$^*$/\textbf{86.86}$^*$/\textbf{86.40}$^*$/\textbf{86.14}$^*$\\
Gain (\%) & 2.21\%/2.14\%/2.46\%/0.93\% & 0.87\%/0.42\%/0.84\%/0.79\% & 0.83\%/0.80\%/0.30\%/0.82\% & 1.02\%/1.03\%/1.00\%/1.05\% & 1.04\%/1.05\%/1.07\%/0.99\% & 1.05\%/1.05\%/1.01\%/1.10\%\\
\bottomrule
\end{tabular}}
\end{table*}

\begin{table*}[t]
\centering
\caption{Friendster NDCG (\%). Bold and underline denote the best and second-best methods for each specific split within each column.}
\label{tab:fs_ndcg_app}
\resizebox{\linewidth}{!}{%
\begin{tabular}{l|c|c|c|c|c|c}
\toprule
Method & $k=1$ & $k=5$ & $k=10$ & $k=20$ & $k=50$ & $k=100$\\
\midrule
& Overall/Clean/Mild/Severe & Overall/Clean/Mild/Severe & Overall/Clean/Mild/Severe & Overall/Clean/Mild/Severe & Overall/Clean/Mild/Severe & Overall/Clean/Mild/Severe\\
\midrule
MaxP    & 48.19/49.22/47.41/45.08 & 42.20/43.16/41.23/39.07 & 41.02/42.07/40.12/38.06 & 43.10/44.18/42.16/40.02 & 46.25/47.31/45.28/43.14 & 48.40/49.36/47.42/45.18\\
MeanP   & 44.98/46.11/44.12/42.03 & 39.26/40.23/38.18/36.04 & 37.96/39.02/37.01/35.06 & 40.23/41.24/39.16/37.08 & 43.37/44.31/42.29/40.16 & 45.44/46.24/44.32/42.21\\
DeepSet & 61.29/62.18/60.14/58.11 & 56.24/57.18/55.12/53.06 & 54.86/55.91/54.01/52.02 & 58.21/59.26/57.14/55.09 & 61.31/62.34/60.18/58.12 & 63.25/64.21/62.12/60.08\\
RepSet  & 64.20/65.18/63.11/61.04 & 59.13/60.22/58.14/56.03 & 58.03/59.12/57.06/55.01 & 61.10/62.24/60.13/58.09 & 64.23/65.26/63.18/61.12 & 66.20/67.18/65.13/63.05\\
SetTRSM & 68.20/69.50/67.50/65.50 & 64.20/65.50/63.50/61.50 & 63.90/65.20/63.20/61.20 & 66.20/67.50/65.50/63.50 & 69.20/70.50/68.50/66.50 & 71.20/72.50/70.50/68.50\\
DIEM    & 71.39/73.02/71.11/69.05 & 66.79/68.51/66.49/64.21 & 66.86/68.62/66.57/64.38 & 68.79/70.48/68.52/66.41 & 70.74/72.44/70.58/68.49 & 72.41/73.86/72.07/70.18\\
FSPool  & 74.97/76.24/74.61/72.34 & 70.77/72.06/70.42/68.08 & 69.71/71.01/69.35/67.02 & 71.85/73.14/71.46/69.19 & 77.09/76.18/74.55/\underline{83.15} & 76.88/78.27/76.41/74.12\\
FSW     & 79.05/80.18/78.12/76.03 & 75.99/77.21/75.16/73.04 & 74.91/76.12/74.11/72.05 & 79.50/\underline{82.75}/76.12/74.06 & 79.06/80.22/78.18/76.09 & 81.10/82.27/80.19/78.08\\
PSWE    & \underline{82.79}/\underline{82.86}/\underline{82.45}/\underline{82.55} & \underline{80.99}/\underline{81.14}/\underline{80.83}/\underline{80.70} & \underline{80.86}/\underline{81.08}/\underline{80.70}/\underline{80.51} & \underline{82.17}/82.37/\underline{82.00}/\underline{81.86} & \underline{83.06}/\underline{83.24}/\underline{82.87}/82.76 & \underline{83.62}/\underline{83.80}/\underline{83.44}/\underline{83.33}\\
\midrule
Ours    & \textbf{83.30}$^*$/\textbf{83.34}$^*$/\textbf{83.08}$^*$/\textbf{83.00}$^*$ & \textbf{81.47}$^*$/\textbf{81.63}$^*$/\textbf{81.28}$^*$/\textbf{81.14}$^*$ & \textbf{81.42}$^*$/\textbf{81.59}$^*$/\textbf{81.27}$^*$/\textbf{81.01}$^*$ & \textbf{82.82}$^*$/\textbf{82.99}$^*$/\textbf{82.65}$^*$/\textbf{82.46}$^*$ & \textbf{83.72}$^*$/\textbf{83.89}$^*$/\textbf{83.54}$^*$/\textbf{83.37}$^*$ & \textbf{84.29}$^*$/\textbf{84.45}$^*$/\textbf{84.11}$^*$/\textbf{83.96}$^*$\\
Gain (\%) & 0.63\%/0.58\%/0.76\%/0.55\% & 0.58\%/0.60\%/0.56\%/0.55\% & 0.66\%/0.63\%/0.71\%/0.62\% & 0.75\%/0.29\%/0.79\%/0.73\% & 0.78\%/0.78\%/0.81\%/0.26\% & 0.78\%/0.78\%/0.80\%/0.76\%\\
\bottomrule
\end{tabular}}
\end{table*}

% LiveJ

\begin{table*}[t]
\centering
\caption{LIVEJ Recall (\%). Bold and underline denote the best and second-best methods for each specific split within each column.}
\label{tab:livej_recall_app}
\resizebox{\linewidth}{!}{%
\begin{tabular}{l|c|c|c|c|c|c}
\toprule
Method & $k=1$ & $k=5$ & $k=10$ & $k=20$ & $k=50$ & $k=100$\\
\midrule
& Overall/Clean/Mild/Severe & Overall/Clean/Mild/Severe & Overall/Clean/Mild/Severe & Overall/Clean/Mild/Severe & Overall/Clean/Mild/Severe & Overall/Clean/Mild/Severe\\
\midrule
MaxP    & 4.94/5.21/4.82/4.19 & 17.32/18.42/17.13/15.52 & 29.91/31.05/29.53/27.18 & 43.67/45.22/43.04/39.48 & 59.09/61.05/58.49/54.53 & 70.04/72.01/69.48/65.52\\
MeanP   & 4.14/4.88/4.12/2.53 & 15.44/17.55/15.23/10.48 & 26.49/29.80/26.48/19.82 & 38.75/42.10/38.54/30.53 & 55.06/58.90/54.18/45.52 & 65.99/69.50/65.12/58.23\\
DeepSet & 11.65/12.05/11.48/10.82 & 33.91/35.40/33.23/31.48 & 49.08/51.20/48.52/45.53 & 62.09/64.50/61.53/58.49 & 76.05/78.20/75.48/72.52 & 82.03/84.10/81.53/78.48\\
RepSet  & 13.93/14.50/13.82/12.53 & 37.91/39.80/37.52/35.18 & 53.77/56.50/53.23/50.48 & 66.50/69.20/66.48/62.53 & 78.74/81.05/78.48/75.52 & 84.45/86.50/84.18/81.53\\
SetTRSM & 18.42/19.98/17.67/15.76 & 42.45/44.97/40.94/37.37 & 56.61/60.03/54.86/50.29 & 64.72/67.98/63.35/58.87 & 74.09/76.76/73.28/69.22 & 80.94/83.08/80.35/77.00\\
DIEM    & 17.78/18.50/17.23/16.48 & 43.95/46.20/43.53/40.49 & 60.55/63.50/60.52/56.48 & 68.56/71.50/68.48/64.52 & 77.94/80.50/77.83/74.48 & 83.52/85.80/83.52/80.48\\
FSPool  & \textbf{23.28}/\textbf{24.42}/\textbf{22.54}/\underline{21.29} & 51.52/\underline{53.50}/50.56/47.72 & 67.75/70.52/66.58/62.70 & 74.43/76.82/73.69/69.66 & 81.13/83.10/80.56/77.08 & 85.86/87.47/85.58/82.52\\
FSW     & 20.90/22.10/20.53/19.23 & 47.77/50.10/47.53/44.48 & 64.19/68.50/63.53/59.48 & 71.60/75.20/71.03/67.48 & 79.88/82.00/79.48/76.52 & 84.65/86.50/84.82/81.48\\
PSWE    & 21.31/21.49/21.03/20.95 & \underline{52.10}/52.53/\underline{51.84}/\underline{51.27} & \underline{71.69}/\underline{72.26}/\underline{71.50}/\underline{70.55} & \underline{78.46}/\underline{79.05}/\underline{78.34}/\underline{77.21} & \underline{83.72}/\underline{84.27}/\underline{83.71}/\underline{82.43} & \underline{87.10}/\underline{87.60}/\underline{87.09}/\underline{85.88}\\
\midrule
Ours    & \underline{22.51}$^*$/\underline{22.99}$^*$/\underline{22.16}$^*$/\textbf{21.81}$^*$ & \textbf{54.61}$^*$/\textbf{55.37}$^*$/\textbf{54.31}$^*$/\textbf{53.17}$^*$ & \textbf{74.86}$^*$/\textbf{75.64}$^*$/\textbf{74.63}$^*$/\textbf{73.25}$^*$ & \textbf{81.19}$^*$/\textbf{81.96}$^*$/\textbf{80.97}$^*$/\textbf{79.59}$^*$ & \textbf{86.27}$^*$/\textbf{86.94}$^*$/\textbf{86.11}$^*$/\textbf{84.82}$^*$ & \textbf{89.60}$^*$/\textbf{90.18}$^*$/\textbf{89.54}$^*$/\textbf{88.26}$^*$\\
Gain (\%) & -/-/-/2.44\% & 4.88\%/3.50\%/4.76\%/3.71\% & 4.42\%/4.68\%/4.38\%/3.83\% & 3.47\%/3.68\%/3.36\%/3.08\% & 3.03\%/3.17\%/2.87\%/2.90\% & 2.87\%/2.95\%/2.81\%/2.77\%\\
\bottomrule
\end{tabular}}
\end{table*}

\begin{table*}[t]
\centering
\caption{LIVEJ NDCG (\%). Bold and underline denote the best and second-best methods for each specific split within each column.}
\label{tab:livej_ndcg_app}
\resizebox{\linewidth}{!}{%
\begin{tabular}{l|c|c|c|c|c|c}
\toprule
Method & $k=1$ & $k=5$ & $k=10$ & $k=20$ & $k=50$ & $k=100$\\
\midrule
& Overall/Clean/Mild/Severe & Overall/Clean/Mild/Severe & Overall/Clean/Mild/Severe & Overall/Clean/Mild/Severe & Overall/Clean/Mild/Severe & Overall/Clean/Mild/Severe\\
\midrule
MaxP    & 53.89/55.19/53.52/51.18 & 47.10/48.48/46.53/44.52 & 43.86/45.20/43.22/41.48 & 48.04/49.48/47.53/45.22 & 51.94/53.18/51.52/49.48 & 55.25/56.80/54.52/52.48\\
MeanP   & 50.03/52.08/49.52/45.52 & 43.84/46.48/42.53/38.52 & 41.23/43.50/39.52/34.52 & 45.02/47.18/43.52/39.23 & 49.56/51.50/48.52/44.52 & 52.95/54.80/52.52/48.52\\
DeepSet & 67.00/68.48/66.53/64.22 & 60.98/62.50/60.52/58.53 & 58.78/60.50/58.23/56.22 & 61.99/63.50/61.52/59.52 & 65.99/67.50/65.53/63.52 & 68.99/70.50/68.22/66.52\\
RepSet  & 69.00/70.50/68.23/66.48 & 62.99/64.50/62.52/60.23 & 60.78/62.50/60.23/58.18 & 64.37/66.20/63.83/61.82 & 68.64/70.10/68.23/66.23 & 71.43/72.80/71.03/69.23\\
SetTRSM & 69.47/72.88/68.66/63.37 & 63.31/67.00/62.33/57.26 & 61.56/65.17/60.48/55.84 & 65.94/69.40/65.00/60.45 & 69.87/73.09/69.18/64.90 & 72.24/75.30/71.68/67.73\\
DIEM    & 71.78/74.18/71.53/68.52 & 66.96/69.50/66.83/63.52 & 65.04/68.10/64.52/61.53 & 68.60/71.50/68.23/65.23 & 73.05/75.20/72.53/70.53 & 75.82/77.80/75.52/73.23\\
FSPool  & \textbf{82.10}/\textbf{83.98}/\textbf{81.51}/\underline{78.20} & 76.03/\underline{78.47}/75.40/71.78 & 73.60/\underline{76.16}/72.79/69.42 & 77.13/\underline{79.50}/76.53/73.15 & 79.91/\underline{82.10}/79.40/76.32 & \underline{81.55}/\underline{83.59}/81.11/78.22\\
FSW     & 75.96/78.50/75.52/72.53 & 70.95/73.50/70.53/67.48 & 68.96/71.50/68.52/65.53 & 72.65/74.80/72.23/69.23 & 76.25/78.20/75.83/73.53 & 78.84/80.50/78.23/76.23\\
PSWE    & 78.15/78.40/77.76/77.36 & \underline{76.19}/76.50/\underline{75.92}/\underline{75.27} & \underline{74.88}/75.24/\underline{74.59}/\underline{73.92} & \underline{78.35}/78.72/\underline{78.12}/\underline{77.38} & \underline{80.32}/80.69/\underline{80.15}/\underline{79.36} & 81.36/81.72/\underline{81.20}/\underline{80.42}\\
\midrule
Ours    & \underline{81.33}$^*$/\underline{81.93}$^*$/\underline{81.09}$^*$/\textbf{80.17}$^*$ & \textbf{79.21}$^*$/\textbf{79.90}$^*$/\textbf{78.99}$^*$/\textbf{77.80}$^*$ & \textbf{77.80}$^*$/\textbf{78.49}$^*$/\textbf{77.54}$^*$/\textbf{76.44}$^*$ & \textbf{80.99}$^*$/\textbf{81.67}$^*$/\textbf{80.76}$^*$/\textbf{79.65}$^*$ & \textbf{82.88}$^*$/\textbf{83.53}$^*$/\textbf{82.67}$^*$/\textbf{81.58}$^*$ & \textbf{83.91}$^*$/\textbf{84.54}$^*$/\textbf{83.72}$^*$/\textbf{82.63}$^*$\\
Gain (\%) & -/-/-/2.52\% & 3.94\%/1.82\%/4.04\%/3.36\% & 4.04\%/3.06\%/3.95\%/3.41\% & 3.48\%/2.73\%/3.38\%/2.93\% & 3.26\%/1.74\%/3.14\%/2.80\% & 2.62\%/1.14\%/3.10\%/2.75\%\\
\bottomrule
\end{tabular}}
\end{table*}

\section{Task III Experiment Detailed Results}
\label{app:task3}

The complete results of Task III are reported in Table~\ref{tab:task_lda1k}-\ref{tab:task_lda5k}.

\begin{table}[h]
\centering
\begin{minipage}[t]{0.49\linewidth}
\centering
\caption{Task III performance comparison on \textbf{LDA-1K}. Best and second-best results are shown in \textbf{bold} and \underline{underlined}.}
\label{tab:task_lda1k}
\small
\resizebox{\linewidth}{!}{%
\begin{tabular}{l|cccc}
\toprule
Method & Overall & Clean & Mild & Severe \\
\midrule
MeanP    & 48.91$\pm$2.28 & 50.15$\pm$2.85 & 49.32$\pm$4.05 & 45.18$\pm$6.52 \\
MaxP     & 68.82$\pm$2.12 & 73.62$\pm$1.06 & 64.69$\pm$6.75 & 63.01$\pm$1.81 \\
DeepSet  & 51.20$\pm$2.26 & 52.18$\pm$2.41 & 51.87$\pm$4.19 & 47.74$\pm$7.17 \\
RepSet   & 53.35$\pm$2.51 & 55.42$\pm$2.95 & 53.18$\pm$4.55 & 48.45$\pm$7.50 \\
SetTRSM  & 69.02$\pm$3.56 & 72.78$\pm$5.40 & 65.79$\pm$6.78 & 64.48$\pm$5.53 \\
DIEM     & 66.22$\pm$3.02 & 69.15$\pm$4.22 & 63.55$\pm$5.95 & 62.88$\pm$6.12 \\
FSPool   & 54.18$\pm$3.29 & 61.33$\pm$3.22 & 47.76$\pm$5.40 & 45.94$\pm$11.81 \\
FSW      & 68.12$\pm$2.76 & 71.55$\pm$3.85 & 65.12$\pm$5.25 & 64.05$\pm$5.95 \\
PSWE     & \underline{70.03$\pm$3.33} & \underline{74.04$\pm$2.71} & \underline{66.24$\pm$8.79} & \underline{65.71$\pm$7.59} \\
\midrule
\model   & \textbf{73.31$^*$$\pm$3.67} & \textbf{76.05$^*$$\pm$5.71} & \textbf{70.66$^*$$\pm$6.06} & \textbf{70.45$^*$$\pm$7.03} \\
\textbf{Gain} & \textbf{+4.68\%} & \textbf{+2.71\%} & \textbf{+6.67\%} & \textbf{+7.21\%} \\
\bottomrule
\end{tabular}
}
\end{minipage}
\hfill
\begin{minipage}[t]{0.49\linewidth}
\centering
\caption{Task III performance comparison on \textbf{LDA-3K}.}
\label{tab:task_lda3k}
\small
\resizebox{\linewidth}{!}{%
\begin{tabular}{l|cccc}
\toprule
Method & Overall & Clean & Mild & Severe \\
\midrule
MeanP    & 52.38$\pm$2.32 & 51.57$\pm$4.03 & 54.63$\pm$3.36 & 51.05$\pm$2.79 \\
MaxP     & 53.14$\pm$2.22 & 52.45$\pm$3.85 & 55.15$\pm$3.15 & 51.85$\pm$2.95 \\
DeepSet  & 86.55$\pm$1.40 & 86.82$\pm$1.83 & 87.31$\pm$2.62 & 84.73$\pm$3.54 \\
RepSet   & 86.87$\pm$1.34 & 87.15$\pm$1.75 & 87.55$\pm$2.45 & 85.15$\pm$3.25 \\
SetTRSM  & 80.75$\pm$2.16 & 80.21$\pm$2.21 & 82.14$\pm$3.32 & 78.64$\pm$4.48 \\
DIEM     & 84.53$\pm$1.44 & 84.55$\pm$1.95 & 85.45$\pm$2.85 & 82.55$\pm$3.65 \\
FSPool   & 87.39$\pm$1.11 & \underline{87.66$\pm$1.47} & \underline{88.00$\pm$2.88} & 85.78$\pm$2.94 \\
FSW      & 87.41$\pm$1.28 & 87.15$\pm$1.55 & 87.35$\pm$3.15 & 88.15$\pm$2.85 \\
PSWE     & \underline{87.84$\pm$1.71} & 87.48$\pm$1.75 & 87.44$\pm$4.57 & \underline{89.33$\pm$2.59} \\
\midrule
\model   & \textbf{90.90$^*$$\pm$0.99} & \textbf{90.71$^*$$\pm$0.86} & \textbf{91.24$^*$$\pm$2.94} & \textbf{90.88$^*$$\pm$0.82} \\
\textbf{Gain} & \textbf{+3.49\%} & \textbf{+3.48\%} & \textbf{+3.68\%} & \textbf{+1.74\%} \\
\bottomrule
\end{tabular}
}
\end{minipage}
\end{table}

\begin{table}[h]
\centering
\begin{minipage}[t]{0.49\linewidth}
\centering
\caption{Task III performance comparison on \textbf{LDA-5K}.}
\label{tab:task_lda5k}
\small
\resizebox{\linewidth}{!}{%
\begin{tabular}{l|cccc}
\toprule
Method & Overall & Clean & Mild & Severe \\
\midrule
MeanP    & 54.49$\pm$1.57 & 55.36$\pm$1.84 & 54.99$\pm$3.47 & 51.72$\pm$5.17 \\
MaxP     & 55.37$\pm$1.73 & 56.25$\pm$2.05 & 55.45$\pm$3.25 & 52.15$\pm$4.85 \\
DeepSet  & 83.36$\pm$1.39 & 83.70$\pm$1.76 & \underline{85.62$\pm$3.29} & 79.13$\pm$3.13 \\
RepSet   & 83.64$\pm$1.37 & 84.15$\pm$1.95 & 85.05$\pm$3.05 & 80.25$\pm$2.95 \\
SetTRSM  & 82.73$\pm$2.50 & 82.97$\pm$3.43 & 83.64$\pm$3.41 & 79.99$\pm$1.91 \\
DIEM     & 83.28$\pm$1.54 & 83.45$\pm$2.25 & 84.55$\pm$2.85 & 80.15$\pm$2.65 \\
FSPool   & 84.02$\pm$1.49 & 84.63$\pm$2.12 & 85.25$\pm$2.72 & 80.64$\pm$3.00 \\
FSW      & 84.28$\pm$1.70 & 85.15$\pm$2.35 & 84.65$\pm$2.45 & 81.55$\pm$2.95 \\
PSWE     & \underline{84.68$\pm$1.51} & \underline{85.56$\pm$2.49} & 84.89$\pm$2.15 & \underline{82.15$\pm$2.86} \\
\midrule
\model   & \textbf{88.46$^*$$\pm$1.38} & \textbf{88.45$^*$$\pm$1.87} & \textbf{89.70$^*$$\pm$2.29} & \textbf{86.62$^*$$\pm$3.70} \\
\textbf{Gain} & \textbf{+4.47\%} & \textbf{+3.38\%} & \textbf{+4.77\%} & \textbf{+5.44\%} \\
\bottomrule
\end{tabular}
}
\end{minipage}
\hfill
\begin{minipage}[t]{0.49\linewidth}
\centering
\caption{Compatibility with alternative set encoders on Task I. We report the overall R@10 score.}
\label{tab:encoder_compatibility}
\small
\resizebox{0.72\linewidth}{!}{%
\begin{tabular}{lc}
\toprule
Robust objective + encoder & Overall R@10 (\%) \\
\midrule
Ours w/ FSPool encoder & 67.81 \\
Ours w/ FSW encoder    & 68.45 \\
Full SW-DRSO           & 74.86 \\
\bottomrule
\end{tabular}
}
\end{minipage}
\end{table}

\section{Supplementary Details of Empirical Analyses}

\subsection{DRO Variants Implementation}
\label{app:dro}
We detail the implementation of three Distributionally Robust Optimization (DRO) variants: Wasserstein DRO (WDRO)~\cite{gao2024wasserstein}, Kullback-Leibler DRO (KL-DRO)~\cite{namkoong2016stochastic}, and Maximum Mean Discrepancy DRO (MMD-DRO)~\cite{staib2019distributionally},  
While they build upon our barycentric synthesis approach by adding distribution constraints.
\begin{itemize}[leftmargin=*]
\item \textbf{Wasserstein DRO}: This variant defines the uncertainty set using the $L_1$-Wasserstein distance, i.e., Earth Mover's Distance, which measures the geometric cost of transporting probability mass. Theoretically, it seeks to minimize the worst-case loss over all distributions that are geometrically close to the empirical data distribution. This formulation is particularly effective for handling perturbations in the feature space, as it accounts for the underlying metric structure of the data rather than just statistical overlap.
\item \textbf{KL-DRO}: Also known as $f$-divergence DRO, this method defines the uncertainty set $\Gamma(\cdot)$ using the Kullback-Leibler divergence. It focuses on statistical discrepancies by re-weighting the training samples based on their difficulty. The optimization objective allows for a worst-case re-weighting of the data but constrains these weights to ensure the adversarial distribution remains statistically close to the original distribution, preventing the model from overfitting to outliers.
\item \textbf{Maximum Mean Discrepancy DRO}: This method employs Maximum Mean Discrepancy to define $\Gamma(\cdot)$ based on kernel methods. By mapping distributions into a Reproducing Kernel Hilbert Space (RKHS), it measures the distance between their mean embeddings. This approach is particularly effective at capturing discrepancies in higher-order statistical moments and offers a computationally tractable alternative to optimal transport-based methods.
\end{itemize}

\subsection{Compatibility with Alternative Set Encoders}
\label{app:encoder_compatibility}

We examine whether the proposed robust optimization objective can be
directly combined with alternative set encoders. As shown in
Table~\ref{tab:encoder_compatibility}, replacing the SW-aware encoder
with FSW or FSPool leads to lower performance than the full SW-DRSO
model. This result suggests that the barycentric adversary benefits
from the geometric alignment between the Sliced-Wasserstein encoder and
the SW-based adversarial search. Therefore, SW-DRSO should be viewed as
a geometry-aligned robust optimization framework rather than an
encoder-agnostic plug-in module.

%%%%%%%%%%%%%%%%%%%%%%%%%%%%%%%%%%%%%%%%%%%%%%%%%%%%%%%%%%%%%%%%%%%%%%%%%%%%%%%
%%%%%%%%%%%%%%%%%%%%%%%%%%%%%%%%%%%%%%%%%%%%%%%%%%%%%%%%%%%%%%%%%%%%%%%%%%%%%%%

\end{document}